\newcommand{\indep}{\rotatebox[origin=c]{90}{$\models$}}
\newcommand{\Cov}{\mathrm{Cov}}
\icmltitlerunning{A Critical View of the Structural Causal Model}
\begin{document}

\twocolumn[
\icmltitle{A Critical View of the Structural Causal Model}

\icmlsetsymbol{equal}{*}

\begin{icmlauthorlist}
\icmlauthor{anon}{anon}
\end{icmlauthorlist}

\icmlaffiliation{anon}{anonymous}

\icmlcorrespondingauthor{anonymous author}{anonymous@anon.com}
\icmlkeywords{}

\vskip 0.3in
]

\printAffiliationsAndNotice{\icmlEqualContribution} 

\begin{abstract}
In the univariate case, we show that by comparing the individual complexities of univariate cause and effect, one can identify the cause and the effect, without considering their interaction at all. In our framework, complexities are captured by the reconstruction error of an autoencoder that operates on the quantiles of the distribution. Comparing the reconstruction errors of the two autoencoders, one for each variable, is shown to perform surprisingly well on the accepted causality directionality benchmarks. Hence, the decision as to which of the two is the cause and which is the effect may not be based on causality but on complexity.

In the multivariate case, where one can ensure that the complexities of the cause and effect are balanced, we propose a new adversarial training method that mimics the disentangled structure of the causal model. We prove that in the multidimensional case, such modeling is  likely to fit the data only in the direction of causality. Furthermore, a uniqueness result shows that the learned model is able to identify the underlying causal and residual  (noise) components. Our multidimensional method outperforms the literature methods on both synthetic and real world datasets.
\end{abstract}

\section{Introduction}

A long standing debate in the causality literature, is whether causality can be inferred without intervention~\citep{pearl,Spirtes2000}. The Structural Causal Model (SCM)~\citep{Spirtes2000} is a simple causative model for which many results demonstrate the possibility of such inference~\citep{NIPS2010_4173,pmlr-v84-bloebaum18a,goudet:hal-01649153,ncc,towards}. In this model, the effect ($Y$) is a function of the cause ($X$) and some independent random noise ($E$).

In this work, we take a critical perspective of the univariate SCM. We demonstrate empirically that for the univariate case, which is the dominant case in the existing literature, the SCM leads to an effect that has a lower complexity than the cause. Therefore, one can identify the cause and the effect, by measuring their individual complexities, with no need to make the inference based on both variables simultaneously. Thus, the decision as to which of the two is the cause and which is the effect may not be based on causality but on complexity.

Since we are dealing with unordered univariate random variables, the complexity measure has to be based on the probability distribution function. As we show empirically, comparing the entropies of the distribution of two random variables is ineffective for inferring the causal direction. We, therefore, consider the quantiles, i.e, fixed sized vectors that are obtained as sub-sequences of the sorted sampled values of the variable. 

We consider suitable complexity scores for these vectors. In our analysis, we show that the reconstruction error of an autoencoder of a multivariate random variable is a valid complexity measure. In addition, we link the reconstruction error based complexity, in the case of variational autoencoders, to the differential entropy of the input random variable. Hence, by computing the reconstruction errors of trained autoencoders on these vectors, we estimate the entropies of the quantile vectors of $X$ and $Y$.

The challenges of measuring causality independently of complexity in the 1D case lead us to consider the multidimensional case, where the complexity can be controlled by, for example, manipulating the dimension of the noise signal in the SCM. Note that unlike~\cite{goudet:hal-01649153}, we consider pairs of multivariate vectors and not many univariate variables in a graph structure. We demonstrate that for the multidimensional case, any method that is based on comparing the complexity of the individual random variables $X$ and $Y$ fails to infer causality of random variables. Furthermore, we extend a related univariate result by~\cite{pmlr-v6-zhang10a} to the multidimensional case and prove that an SCM is unlikely to hold in both directions $X \to Y$ and $Y \to X$.

Based on our observations, we propose a new causality inference method for multidimensional cause and effect. The algorithm learns three networks in a way that mimics the parts of the SCM. The noise part is unknown and is replaced by a function that is constrained to be independent of the cause, as captured by an adversarial loss. However, we show empirically that even without the explicit constraint, in several cases, such an independence emerges.

Our empirical results support our analysis and demonstrate that in the univariate case, assigning cause and effect based on complexity is competitive with the state of the art methods. In the multidimensional case, we show that the proposed method outperforms existing multivariate methods, as well as new extensions of univariate literature methods.

\subsection{Problem Setup}\label{sec:setup}
We investigate the problem of causal inference from observational data. A non-linear structural causal model (SCM for short) is a generative process of the following form:
\begin{equation}\label{eq:causal}
\begin{aligned}
X &\sim \mathbb{P}_X \\
E &\sim \mathbb{P}_E \\
Y &\leftarrow g(f(X),E)
\end{aligned}
\end{equation}
The functions $g:\mathbb{R}^{d_f+d_e} \to \mathbb{R}^{d_y}$ and $f:\mathbb{R}^{d_x} \to \mathbb{R}^{d_f}$ are fixed and unknown. In general, $g$ and $f$ are non-linear. Here, $X$ is the input random variable and $E$ is the environment random variable that is independent of $X$. We say that $X\in \mathbb{R}^{d_x}$ causes $Y\in \mathbb{R}^{d_y}$ if they satisfy a generative process, such as Eq.~\ref{eq:causal}.

We present methods for inferring whether $X$ causes $Y$ (denoted by $X \to Y$) or $Y$ causes $X$, or neither.  The algorithm is provided with i.i.d samples $\{(x_i,y_i)\}^{m}_{i=1} \sim \mathbb{P}^m_{X,Y}$ (the distribution of $m$ i.i.d samples from the joint distribution $\mathbb{P}_{X,Y}$) from the generative process of Eq.~\ref{eq:causal}. In general, by (cf. Prop~4.8, ~\citep{PetJanSch17}), for any joint distribution $\mathbb{P}_{X,Y}$ of two random variables $X$ and $Y$, there is an SCM, $Y = g(f(X),E)$, where $E$ is a noise variable, such that, $X \indep E$ and $f,g$ are some (measurable) functions. Therefore, in general, deciding whether $X$ causes $Y$ or vice versa is ill-posed when only provided with samples from the joint distribution. However,~\cite{pmlr-v6-zhang10a} showed for the one dimensional case (i.e., $X,Y\in \mathbb{R}$) that under reasonable conditions, a representation $Y = g(f(X)+E)$ holds only in one direction. In Sec.~\ref{sec:analysis}, we extend this theorem and show that a representation $Y=g(f(X),E)$ holds only in one direction when $g$ and $f$ are assumed to be neural networks and $X,Y$ are multidimensional (we call such SCMs neural SCMs). 

Throughout the paper, we denote by $\mathbb{P}_{U}[u] := \mathbb{P}[U \leq u]$ the cumulative distribution function of a uni/multi-variate real valued random variable $U$ and $\mathbb{P}$ is a standard Lebesgue measure. Additionally, we denote by $p_U(u) = \frac{\textnormal{d}}{\textnormal{d}u} \mathbb{P}_{U}[u]$ the probability density function of $U$ (if exists, i.e., $\mathbb{P}_{U}[u]$ is absolutely continuous). We denote by $\mathbb{E}_{u \sim U}[f(u)]$ the expected value of $f(u)$ for $u$ that is distributed by $\mathbb{P}_U[u]$. The identity matrix of dimension $n\times n$ is denoted by $I_n$ or $I$, when the dimension is obvious from the context. 

\subsection{Related Work}

In causal inference, the algorithm is provided with a dataset of matched samples $(x,y)$ of two random variables $X$ and $Y$ and decides whether $X$ causes $Y$ or vice versa. The early wisdom in this area asserted that this asymmetry of the data generating process (i.e., that $Y$ is computed from $X$ and not vice versa) is not apparent from looking at $\mathbb{P}_{X,Y}$ alone. That is, in general, provided with samples from the joint distribution $\mathbb{P}_{X,Y}$ of two variables $X,Y$ does tell us whether it has been induced by an SCM from $X$ to $Y$ or from $Y$ to $X$.

In publications, such as~\citep{pearl,Spirtes2000}, it is argued that in order to decide whether $X$ causes $Y$ or vice versa, one needs to observe the influence of interventions on the environment parameter. To avoid employing interventions, most publications assume prior knowledge on the generating process and/or independence between the cause and the mechanism.

Various methods for causal inference under the SCM have been suggested. Many of these methods are based on independence testing, where the algorithm models the data as $Y = g(f(X),E)$ (and vice versa) and decides upon the side that provides a better fitting in terms of mapping accuracy and independence between $f(X)$ and $E = r(X,Y)$. The LiNGAM~\citep{Shimizu:2006:LNA:1248547.1248619} algorithm assumes that the SCM takes the form $Y = \beta X + E$, where $X \indep E$, $\beta \in \mathbb{R}$ and $E$ is non-Gaussian. The algorithm learns $\beta$, such that, $X$ and $Y-\beta X$ are independent by applying independent component analysis (ICA). The Direct-LiNGAM~\citep{Shimizu:2011:DDM:1953048.2021040} extends this method and replaces the mutual information minimization with a non-parametric kernel based loss~\citep{Bach:2003:KIC:944919.944920}.  However, the computation of this loss is of order $\Theta(m^2)$ in the the worst case ($m$ is the number of samples).

The ANM approach~\citep{NIPS2008_3548} extends LiNGAM's modeling and assumes that $Y = f(X) + E$, where $X \indep E$. A Gaussian Process is employed as the learned mechanism between the two random variables. The function $f$ is trained to map between $X$ and $Y$ (and vice versa) and the method then tests whether, $X$ and $f(X)-Y$ are independent. The independence test is based on kernels~\citep{Gretton:2005:KMM:1046920.1194914}.

A different extension of LiNGAM is the PNL algorithm by ~\cite{pmlr-v6-zhang10a}. This algorithm learns a mapping between $X$ and $Y$ (and vice versa) of the form $Y = g(f(X) + E)$, where $f(X)$ and $E$ are restricted to be independent. To do so, PNL trains two neural networks $f$ and $g$ to minimize the mutual information between $f(X)$ and $E = g^{-1}(Y) - f(X)$. The main disadvantage of this method is the reliance on the minimization of the mutual information. It is often hard to measure and optimize the mutual information directly, especially in higher dimensions. In many cases, it requires having an explicit modeling of the density functions, because of the computation of expected log-probability within the formulation of the entropy measure. 

In our multivariate method, we take a similar approach to the above methods. However, our GAN-based independence constraint is non-parametric, is applied on the observations rather on an explicit modeling of the density functions, and the method is computationally efficient. In addition, we do not assume restrictive structural assumptions and treat the generic case, where the effect is of the form $Y = g(f(X),E)$. 

Another independence constraint is applied by the Information Geometric Causal Inference (IGCI)~\citep{DBLP:journals/corr/abs-1203-3475} approach, which determines the causal relationship in a deterministic setting $Y = f(X)$ under an independence assumption between the cause $X$ and the mechanism $f$, $\Cov(\log f'(x),p_X) = 0$. 

The Conditional Distribution Similarity Statistic (CDS)~\citep{Fonollosa2016ConditionalDV} measures the standard deviation of the values of $Y$ (resp. $X$) after binning in the $X$ (resp. $Y$) direction. The lower the standard deviation, the more likely the pair to be $X \to Y$. The CURE algorithm~\citep{pmlr-v38-sgouritsa15} compares between $X \to Y$ and $Y \to X$ directions in the following manner: if we can estimate $p_{X|Y}$ based on samples from $p_Y$ more accurately than $p_{Y|X}$ based on samples from $p_X$, then $X \to Y$ is inferred.

The BivariateFit method learns a Gaussian Process regressor in both directions and decides upon the side that had the lowest error. The RECI method~\citep{pmlr-v84-bloebaum18a} trains a regression model (a logistic function, polynomial functions, support vector regression, or a neural networks) in both directions, and returns the side that produced a lower MSE loss. The CGNN algorithm~\citep{goudet:hal-01649153} uses the Maximum Mean Discrepancy (MMD) distance between the distribution produced by modeling $Y$ as an effect of $X$, $(X,g(X,E))$ (and vice versa), and the ground truth distribution. The algorithm compares the two distances and returns the direction that led to a smaller distance. The Gaussian Process Inference model (GPI)~\citep{NIPS2010_4173}  builds two generative models, one for $X \to Y$ and one for $Y \to X$. The distribution of the candidate cause variable is modelled as a Gaussian Mixture Model, and the mechanism $f$ is a Gaussian Process. The causal direction is determined from the generative model that best fits the data.

Finally, it is worth mentioning that several other methods, such as~\citep{HeinzeDeml2017InvariantCP,Zhang:2011:KCI:3020548.3020641} assume a different type of SCM, where the algorithm is provided with separate datasets that correspond to different environments, i.e., sampled i.i.d from $\mathbb{P}_{X,Y|E}$, where the value of $E$ is fixed for all samples in the dataset. In these publications, a different independence condition is assumed: $Y$ is independent of $E$ given $X$. This assumption fails in our setting, since we focus on the vanilla SCM, where the algorithm is  provided only with observational i.i.d. samples of $X$ and $Y = g(f(X),E)$ and the samples are not divided into subsets that are invariant w.r.t $E$. 

\section{The Univariate Case}\label{sec:onemethod}

In this section, we show that the univariate SCM does not necessarily capture causality. For this purpose, we describe a method for identifying the cause and the effect, which considers each of the two variables independently without considering the mapping between them. The success of this method, despite neglecting any interaction between the variables, indicates that univariate SCM challenges can be solved without considering causality.

The proposed method computes a complexity score for $X$ and, independently, for $Y$. It then compares the scores and decides that the cause is the random variable with the larger score among them. Capturing the complexity of a univariate random variable without being able to anchor the observations in additional features is challenging. One can observe the probability distribution function and compute, for example, its entropy. As we show empirically, in Sec.~\ref{sec:exp}, this is ineffective. 

Our complexity scoring method, therefore, has a few stages. As a first step, it converts the random variable at hand (say, $X$) into a multivariate random variable. This is done by sorting the samples of the random variable, and then cutting the obtained list into fixed sized vectors of length $k$. We discard the largest measurements in the case, where the number of samples is not a multiple of $k$. We denote the random variable obtained this way by $U$. At the second stage, the method computes the complexity of the obtained random variable $U$ using an autoencoder reconstruction error. 

\subsection{Reconstruction Errors as Complexity Measures}

One can consider the complexity of a multivariate random variable in various ways. We consider non-negative complexity measures $C(X)$, which satisfy the weak assumption that when $X$ and $Y$ are independent then their complexities are lower than the complexity of their concatenation:
\begin{equation}\label{eq:additive}
C(X,Y) \geq \max(C(X),C(Y)).
\end{equation}
Examples of sample complexity measures that satisfy this condition are the Shannon Entropy and the Kolmogorov Complexity. The following lemma shows that a complexity that is based on autoencoder modeling is also in this family.

Let $\mathcal{F} = \{\mathcal{H}^d\}^{\infty}_{d=1}$ be a family of classes of autoencoders $A:\mathbb{R}^d \to \mathbb{R}^d$. Assume that the family $\mathcal{F}$ is closed to fixations, i.e., for any autoencoder $A \in \mathcal{H}^{d_1+d_2}$ and any fixed vector $y^* \in \mathbb{R}^{d_2}$ ($x^* \in \mathbb{R}^{d_1}$), we have: $A(x,y^*)_{1:d_1} \in \mathcal{H}^{d_1}$ ($A(x^*,y)_{d_1+1:d_2} \in \mathcal{H}^{d_2}$). Here, $v_{i:j} = (v_i,\dots,v_j)$. Note that this is the typical situation when considering neural networks with biases. 

Let $X$ be a random variable. Let $X$ be a multivariate random variable dimension $d$. We define the autoencoding complexity of $X$ as follows:
\begin{equation}
C_{\mathcal{F}}(X) := \min_{A^* \in \mathbb{H}^d}
\mathbb{E}_{x \sim X}\left[\ell(A^*(x),x) \right]
\end{equation}
where $\ell(a,b)$ is some loss function.  

\begin{restatable}{lemma}{compautoenc}\label{lem:compAuto}
Let $\{\mathcal{H}^d\}^{\infty}_{d=1}$ be a family of classes of autoencoders that is closed to fixations. The function $C_{\mathcal{F}}(X)$ is a proper complexity measure.
\end{restatable}

\subsection{The AEQ method}

The AEQ method we propose estimates and compares the auto-encoder reconstruction error of the quantile vectors of $X$ and $Y$. It is important to note that it does not imply that the AEQ method compares between the entropies of $X$ and $Y$.

Once the random variable $U$ is obtained as the quantiles of a random variable (either $X$ or $Y$), our method trains an autoencoder $A:\mathbb{R}^k \to \mathbb{R}^k$ on $U$. $A$ is trained to minimize the following objective:
\begin{equation}\label{eq:objectiveEnc}
\mathcal{L}_{\textnormal{recon}}(A) := \mathbb{E}_{u\sim U} [\ell(A(u), u)] 
\end{equation}
where $\ell(a,b)$ is some loss function. In our implementation, we employ the $L_2$-loss function, defined as $\ell(a,b)  = \|a-b\|^2_2$.  Finally, the method uses the value of $\mathcal{L}_{\textnormal{recon}}(A)$, which we refer to as the AEQ score, as a proxy for the complexity of $X$ (smaller loss means lower complexity). It decides that $X$ or $Y$ is the cause, based on which side provides a higher AEQ.

As we show in Sec.~\ref{sec:exp}, the proposed causality-free method is as successful at solving SCM challenges as the leading literature methods. However, we do not propose it as a standalone method, and rather develop it to show the shortcoming of the univariate SCM setting and the associated literature datasets.

\section{The Multivariate Case}\label{sec:multi}

For the univariate case, one can consider the complexity of the $X$ and $Y$ variables of the SCM and infer directionality. We propose the AEQ complexity for this case, since more conventional  complexities are ill-defined for unordered 1D data or, in the case of entropy, found to be ineffective.

The following technical lemma shows that for any complexity measure $C$, one cannot infer directionality in the multivariate SCM based on $C$. 

\begin{restatable}{lemma}{complexityMulti} Let $C$ be a complexity measure of multivariate random variables (i.e, non-negative and satisfies Eq.~\ref{eq:additive}). Then, there are triplets of random variables $(X,E,Y)$ and $(\hat{X},E,Y)$ and functions $g$ and $g'$, such that, $Y = g(X,E)$, $Y = g'(\hat{X},E)$, $C(X) < C(Y)$ and $C(\hat{X}) > C(Y)$. Therefore, $C$ cannot serve as a score for causal inference.
\end{restatable}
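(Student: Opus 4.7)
The plan is to exhibit both triplets explicitly, exploiting the fact that the SCM equation $Y = g(X,E)$ with $X \indep E$ places no real marginal restriction on $X$ when $g$ is permitted to ignore its first argument. Concretely, if I take $Y$ to equal $E$ (or any deterministic function of $E$ alone), then any $X$ independent of $E$ yields a valid SCM via $g(x,e) := e$. The problem then reduces to choosing a single common $(E,Y)$ pair together with two ``dummy'' causes $X$ and $\hat{X}$, each independent of $E$, whose complexities straddle $C(Y)$ on opposite sides.

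First I would pick random variables $A$, $B$, $D$ whose $C$-values satisfy $C(A) < C(B) < C(D)$. For any $C$ that one would plausibly deploy as a causality score such a spread is available, and in the generic case it can be manufactured by invoking Eq.~\ref{eq:additive}: for independent copies $R, R'$ one has $C(R,R') \geq C(R)$, so iterating on independent copies of a nontrivial base variable produces arbitrarily large $C$-values at the top end, while the bottom end may be taken as a constant. Given such $A$, $B$, $D$, I set $E := B$ and $Y := B$; for the first triplet I take $X$ to be an independent copy of $A$ drawn independently of $E$, with $g(x,e) := e$, and for the second triplet $\hat{X}$ an independent copy of $D$ drawn independently of $E$, with $g'(x,e) := e$. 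Both $Y = g(X,E)$ and $Y = g'(\hat{X},E)$ then hold trivially, $X \indep E$ and $\hat{X} \indep E$ by construction, and
\begin{equation*}
C(X) \;=\; C(A) \;<\; C(B) \;=\; C(Y) \;<\; C(D) \;=\; C(\hat{X}),
\end{equation*}
so the required strict inequalities are immediate.

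The only subtlety, and the ``hard'' part if there is one, is verifying that $C$ actually admits random variables realizing the strict ordering $C(A) < C(B) < C(D)$. This is automatic for any $C$ that could plausibly discriminate cause from effect: a constant $C$ cannot discriminate anything (so the lemma's conclusion holds in an even stronger form), and a non-constant $C$ can always be padded upward using Eq.~\ref{eq:additive}. Everything else is a one-line construction, and the real content of the lemma is the structural observation that $Y = g(X,E)$ with $X \indep E$ imposes no marginal restriction on $X$ whatsoever, so a test based on $C(X)$ and $C(Y)$ alone cannot possibly detect the direction of causality.
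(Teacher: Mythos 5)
Your construction is correct and verifies every clause of the lemma, but it takes a different route from the paper. The paper keeps the first triplet a genuine SCM: it starts from some $Y=g(X,E)$ with $C(X)<C(Y)$, picks an independent auxiliary variable $X'$ with $C(X')>C(Y)$, sets $\hat{X}:=(X,X')$ and $g'(a,b,c):=g(a,c)$, and invokes Eq.~\ref{eq:additive} to conclude $C(\hat{X})=C(X,X')\geq C(X')>C(Y)$. You instead collapse both SCMs to the fully degenerate $g(x,e)=g'(x,e)=e$, so that $Y=E$ and the two candidate causes are arbitrary independent dummies straddling $C(Y)$. Your version is more elementary --- it does not need Eq.~\ref{eq:additive} at all for the main construction, only (optionally) to manufacture three strictly ordered complexity values --- whereas the paper's version uses the superadditivity axiom essentially, and buys a mildly stronger moral: even when the first SCM is non-degenerate (i.e., $Y$ genuinely depends on $X$), padding the cause with irrelevant independent information breaks any complexity-based test. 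Both proofs share the same unaddressed existence gap, which you are right to flag: the axioms (non-negativity plus Eq.~\ref{eq:additive}) are satisfied by a constant $C$, for which no triplet with $C(X)<C(Y)$ exists and the lemma as literally stated fails, even though its intended conclusion holds vacuously; the paper simply assumes the required variables exist, while you at least name the issue and note that Eq.~\ref{eq:additive} only guarantees non-decreasing (not strictly increasing) complexity under concatenation, so the spread must be assumed rather than derived.
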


We now turn our attention to a new multivariate causality inference method.

\subsection{An Adversarial Method for Causal Inference}\label{sec:methodCI} 

Our causality inference algorithm trains neural networks $G,F,R$ and $D$. The success of fitting these networks serves as the score for the causality test. The function $F$ models the function $f$, $G$ models $g$ and $R(Y)$ aims to model the environment parameter $E$. In general, our method aims at solving the following objective:
\begin{equation}\label{eq:objectiveMulti}
\begin{aligned}
&\min_{G,F,R} \mathcal{L}_{\textnormal{err}}(G,F,R) := \frac{1}{m}\sum^{m}_{i=1}\|  G(F(a_i),R(b_i)) - b_i \|^2_2\\
&\;\;\textnormal{s.t: } A \indep R(B) 
\end{aligned}
\end{equation}
where $A$ is either $X$ or $Y$ and $B$ is the other option, and $a_i=x_i, b_i=y_i$ or $a_i=y_i, b_i=x_i$ accordingly. To decide whether $X \to Y$ or vice versa, we train a different triplet $G,F,R$ for each direction and see if we can minimize the mapping error $\mathcal{L}_{\textnormal{err}}$ subject to independence. We decide upon a specified direction, if the loss can be minimized subject to independence. In general, searching within the space of functions that satisfy $A \indep R(B)$ is an intractable problem. However, we can replace it with a loss term that is minimized when $A \indep R(B)$.

{\bf Independence loss\quad} We would like $R(B)$ to capture the information encoded in $E$. Therefore, restrict $R(B)$ and $A$ to be independent in each other. We propose an adversarial loss for this purpose, which is a modified version of a loss proposed by \cite{brakel} and later analyzed by \citep{press2018emerging}.

This loss measures the discrepancy between the joint distribution $\mathbb{P}_{A,R(B)}$ and the product of the marginal distributions $\mathbb{P}_{A} \times \mathbb{P}_{R(B)}$. 
Let $d_F$ ($d_R$) be the dimension of $F$'s output ($R$). To measure the discrepancy, we make use of a discriminator $D:\mathbb{R}^{d_a+d_R} \to [0,1]$ ($d_a$ equals $d_x$ or $d_y$ depending on $A=X$ or $A=Y$) that minimizes the following term:
\begin{equation}
\begin{aligned}
\mathcal{L}_{D}(D;R) :=&\frac{1}{m}\sum^{m}_{i=1}\ell(D(a_i,R(b_i)),1) \\
&+ \frac{1}{m}\sum^{m}_{i=1}\ell(D(\hat{a}_i,R(\hat{b}_i)),0)
\end{aligned}
\end{equation}
where $D$ is a discriminator network, and  $l(p,q) = -(q\log(p)+(1-q)\log(1-p))$ is the binary cross entropy loss for $p\in\left[0,1\right]$ and $q\in\{0, 1\}$. In addition, $\{(\hat{a}_i,\hat{b}_i)\}^{m}_{i=1}$ are i.i.d samples from $\mathbb{P}_{A} \times \mathbb{P}_{B}$. To create these samples, we sample independently $\hat{a}_i$ and $\hat{b}_i$ from the respective training sets $\{(\hat{a}_i\}^{m}_{i=1}$ and $\{(\hat{b}_i\}^{m}_{i=1}$ and then arbitrarily match them into couples $(\hat{a}_i,\hat{b}_i)$. 

To restrict that $R(B)$ and $A$ are independent, $R$ is trained to confuse the discriminator $D$ such that the two sets of samples are indistinguishable by $D$, 
\begin{equation}
\begin{aligned}
\mathcal{L}_{\textnormal{indep}}(R;D)
:= &\frac{1}{m}\sum^{m}_{i=1}\ell(D(a_i,R(b_i)),1) \\
&+ \frac{1}{m}\sum^{m}_{i=1}\ell(D(\hat{a}_i,R(\hat{b}_i)),1)
\end{aligned}
\end{equation}

{\bf Full objective\quad} The full objective of our method is then translated into the following program:
\begin{equation}\label{eq:fullobjective}
\begin{aligned}
&\min_{G,F,R}\; \mathcal{L}_{\textnormal{err}}(G,F,R) + \lambda \cdot \mathcal{L}_{\textnormal{indep}}(R;D) \\
&\min_{D}\; \mathcal{L}_D(D;R)
\end{aligned}
\end{equation}
Where $\lambda$ is some positive constant. The discriminator $D$ minimizes the loss $\mathcal{L}_D(D;R)$ concurrently with the other networks. Our method decides if $X$ causes $Y$ or vice versa, by comparing the score $\mathcal{L}_{\textnormal{err}}(G,F,R)$. A lower error means a better fit. The full description of the architecture employed for the encoders, generator and discriminator is given in Sec.~1 in the supplementary material. A sensitivity experiment for the parameter $\lambda$ is also provided in the supplementary material. 

In addition to the success in fitting, we also measure the degree of independence between $A$ and $R(B)$. We denote by $c_{\textnormal{real}}$ the percentage of samples $(a_i,b_i)$ that the discriminator classifies as $1$ and by $c_{\textnormal{fake}}$ the percentage of samples $(\hat{a}_i,\hat{b}_i)$ that are classified as $0$. We note that when $c_{\textnormal{real}} \approx 1- c_{\textnormal{fake}}$, the discriminator is unable to discriminate between the two distributions, i.e., it is wrong in classifying half of the samples. We, therefore, use $|c_{\textnormal{real}} + c_{\textnormal{fake}} -1| $ as a measure of independence. 
\subsection{Analysis}\label{sec:analysis}

In this section, we analyze the proposed method. In Thm.~\ref{thm:identif}, we show that if $X$ and $Y$ admit a SCM in one direction, then it admits a SCM in the opposite direction, only if the involved functions satisfy a specific partial differential equation.

\begin{restatable}[Identifiability of neural SCMs]{theorem}{identif}\label{thm:identif} Let $\mathbb{P}_{X,Y}$ admit a neural SCM from $X$ to $Y$ as in Eq.~\ref{eq:causal}, such that $p_X$, and the activation functions of $f$ and $g$ are three-times differentiable. Then it admits a neural SCM from $Y$ to $X$, only if $p_X$, $f$, $g$ satisfy Eq.~19 in the supplementary material.
\end{restatable}

This result generalizes the one-dimensional case presented in~\citep{pmlr-v6-zhang10a}, where a one-dimensional version of this differential equation is shown to hold in the analog case.

In the following theorem, we show that minimizing the proposed losses is sufficient to recover the different components, i.e., $F(X) \propto f(X)$ and $R(Y) \propto E$, where $A \propto B$ means that $A=f(B)$ for some invertible function $f$. 

\begin{restatable}[Uniqueness of Representation]{theorem}{sufficient}\label{thm:sufficient} Let $\mathbb{P}_{X,Y}$ admit a nonlinear model from $X$ to $Y$ as in Eq.~\ref{eq:causal}, i.e., $Y = g(f(X),E)$ for some random variable $E \indep X$. Assume that $f$ and $g$ are invertible. Let $G$, $F$ and $R$ be functions, such that, $\mathcal{L}_{\textnormal{err}} := \mathbb{E}_{(x,y) \sim (X,Y)}[\|G(F(x),R(y))-y\|^2_2] = 0$ and $G$ and $F$ are invertible functions and $X \indep R(Y)$. Then, $F(X) \propto f(X)$ and $R(Y) \propto E$.
\end{restatable}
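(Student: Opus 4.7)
The plan is to use the invertibility of $G,F,g,f$ to express both pairs $(F(X),R(Y))$ and $(f(X),E)$ as bijective images of $Y$, and thereby relate them through a single bijection whose structure is then pinned down by the independence constraint. Since $\mathcal{L}_{\textnormal{err}}=0$ forces $G(F(X),R(Y))=Y$ almost surely, invertibility of $G$ yields $(F(X),R(Y))=G^{-1}(Y)$, while invertibility of $g$ yields $(f(X),E)=g^{-1}(Y)$. Composing gives $(F(X),R(Y))=\Phi(f(X),E)$ a.s., where $\Phi:=G^{-1}\circ g$ is a bijection between the relevant Euclidean spaces; I write $\Phi=(\Phi_1,\Phi_2)$ for the two blocks corresponding to $F$ and $R$.

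The conclusion $F(X)\propto f(X)$ follows quickly. Because $F$ is a function of $X$ alone, $\Phi_1(f(X),E)$ cannot genuinely depend on $E$; since $X\indep E$ the joint support of $(f(X),E)$ is a product, so for each $u$ the map $v\mapsto \Phi_1(u,v)$ must be constant on the support of $E$. Hence $\Phi_1(u,v)=\phi(u)$ for some $\phi$, and $F(X)=\phi(f(X))$. Invertibility of $F$ and $f$ then forces $\phi$ to be a bijection between the respective images, which is exactly $F(X)\propto f(X)$.

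Turning to $R(Y)\propto E$, the fact that $\Phi$ is a bijection whose first block depends only on its first argument forces $\Phi_2(u,\cdot)$ to be a bijection for each fixed $u$. The hypothesis $X\indep R(Y)$, which is equivalent to $f(X)\indep \Phi_2(f(X),E)$ because $f$ is a bijection, must then be leveraged to conclude $\Phi_2(u,v)=\psi(v)$ for a bijection $\psi$ independent of $u$.

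This last implication is the principal obstacle, and the step I expect to be hardest. It is \emph{not} valid for merely measurable $\Phi_2$: with $U,V$ independent uniform on $[0,1]$ and $\Phi_2(u,v)=u+v\bmod 1$, one has $U\indep \Phi_2(U,V)$ while $\Phi_2$ depends nontrivially on $u$. One must therefore exploit smoothness of the neural SCM already used in Thm.~\ref{thm:identif} (three-times differentiable activations and positive densities). A natural route is a conditional-density argument: $X\indep R(Y)$ implies $p_{R(Y)\mid f(X)=u}(r)=p_{R(Y)}(r)$ for almost every $u$, and the left-hand side equals $p_E\bigl(\Phi_2(u,\cdot)^{-1}(r)\bigr)\,\bigl|\det J_{\Phi_2(u,\cdot)^{-1}}(r)\bigr|$ by change of variables. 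Differentiating this identity in $u$ and using the fiberwise invertibility of $\Phi_2(u,\cdot)$ produces a partial differential equation whose only admissible smooth solutions are of the separable form $\Phi_2(u,v)=\psi(v)$. Invertibility of $\psi$ is then inherited from the fiberwise bijectivity of $\Phi_2(u,\cdot)$, completing $R(Y)\propto E$.
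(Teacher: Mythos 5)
Your decomposition is the same one the paper uses: from $\mathcal{L}_{\textnormal{err}}=0$ and invertibility of $G,g$ you get $(F(X),R(Y))=\Phi(f(X),E)$ with $\Phi=G^{-1}\circ g$ bijective, the first block gives $F(X)\propto f(X)$, and the second block must be shown to depend only on $E$. For the first conclusion your route works but is more elaborate than needed: since $f$ and $F$ are both invertible functions of $X$, simply $F(X)=(F\circ f^{-1})(f(X))$ with $F\circ f^{-1}$ invertible, which is all the paper does. The substantive issue is the second conclusion, and here you have correctly put your finger on the exact point where the paper's own proof is silent: the paper writes $R(Y)=Q(F(X),E)$, invokes $R(Y)\indep F(X)$, and concludes $R(Y)=P(E)$ with no argument. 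Your mod-$1$ example is not merely a counterexample to a naive version of that step --- instantiated with $X,E$ uniform on $[0,1]$, $f=g=\mathrm{id}$, $F=\mathrm{id}$, $R(y_1,y_2)=y_1+y_2\bmod 1$ and $G(u,w)=(u,\,w-u\bmod 1)$, it satisfies every hypothesis of the theorem as literally stated (no regularity is assumed on $R$ or $G$) while $R(Y)$ is not a function of $E$. So you have identified a real gap in the result, not just in your own argument.

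However, your proposal does not close that gap. The PDE step is only asserted ("produces a partial differential equation whose only admissible smooth solutions are separable"), and it appeals to smoothness and density hypotheses that appear in Thm.~\ref{thm:identif} but are not part of the statement of Thm.~\ref{thm:sufficient}. Worse, smoothness alone cannot rescue the step once the noise is multivariate: take $E$ rotation-invariant in $\mathbb{R}^2$ (e.g.\ standard Gaussian) and $\Phi_2(u,v)=R_u v$ a rotation by angle $u$. This is smooth, fiberwise bijective, satisfies $\Phi_2(U,V)\indep U$ because each $R_u$ preserves the law of $V$, and yet depends nontrivially on $u$; the corresponding $G(u,w)=(u,R_{-u}w)$ is invertible. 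The separability you need is genuinely true only under extra structure --- for instance one-dimensional noise with continuous, hence monotone, fibers $\Phi_2(u,\cdot)$, where the push-forward condition pins $\Phi_2(u,\cdot)$ down to one of two monotone transports and continuity in $u$ forces it to be constant. As written, your proof (like the paper's) establishes $F(X)\propto f(X)$ and $(F(X),R(Y))\propto(f(X),E)$, but the claim $R(Y)\propto E$ remains unproven and requires additional hypotheses to even be true.
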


where, $\mathcal{L}_{\textnormal{err}}$ is the mapping error proposed in Eq.~\ref{eq:objectiveMulti}. In addition, the assumption $X \indep R(Y)$ is sufficed by the independence loss. 

A more general results, but which requires additional terminology, is stated as Thm.~3 in the supplementary material. It extends Thm.~\ref{thm:sufficient} to the case, where the mapping loss is not necessarily zero and the independence $X \indep R(Y)$ is replaced by a discriminator-based independence measure. Thm.~3 also gets rid of the assumption that the various mappings $f,g$ and $F, G$ are invertible. In this case, instead of showing that $R(Y) \propto E$, we provide an upper bound on the reconstruction of $E$ out of $R(Y)$ (and vice versa) that improves as the training loss of $G$, $F$ and $R$ decreases.

To conclude our analysis, by Thm.~\ref{thm:identif}, under reasonable assumptions, if $X$ and $Y$ admit a multivariate SCM in direction $X \to Y$, then, there is no such representation in the other direction. By Thm.~\ref{thm:sufficient}, by training our method in both directions, one is able to capture the causal model in the correct direction. This is something that is impossible to do in the other direction by Thm.~\ref{thm:identif}. 

\section{Experiments}\label{sec:exp}

This section is divided into two parts. The first is devoted to showing that causal inference in the one-dimensional case highly depends on the complexities of the distributions of $X$ and $Y$. In the second part of this section, we show that our multivariate causal inference method outperforms existing baselines. Most of the baseline implementations were taken from the Causality Discovery Toolbox of \cite{kalainathan2019causal}. The experiments with PNL~\citep{pmlr-v6-zhang10a}, LiNGAM~\citep{Shimizu:2006:LNA:1248547.1248619} and GPI~\citep{NIPS2010_4173} are based on their original matlab code.

\subsection{One-Dimensional Data}

We compared the autoencoder method on several well-known one dimensional cause-effect pairs datasets. Each dataset consists of a list of pairs of real valued random variables $(X,Y)$ with their direction $1$ or $0$, depending on $X \to Y$ or $Y \to X$ (resp.). For each pair, we have a dataset of samples $\{(x_i,y_i)\}^{m}_{i=1}$. 

Five cause-effect inference datasets, covering a wide range of associations, are used. CE-Net~\citep{goudet:hal-01649153} contains 300 artificial cause-effect pairs generated using random distributions as causes, and neural networks as causal mechanisms. CE-Gauss contains 300 artificial cause-effect pairs as
generated by~\cite{JMLR:v17:14-518}, using random mixtures of Gaussians as causes, and Gaussian Process priors as causal
mechanisms. CE-Multi~\citep{goudet:hal-01649153} contains 300 artificial cause-effect pairs built with random linear and polynomial causal mechanisms. In this dataset, simulated additive or multiplicative noise is applied before or after the causal mechanism. 

The real-world datasets include the diabetes dataset by~\cite{FrankAsuncion2010}, where causality is from $\textnormal{Insulin} \to \textnormal{Glucose}$. Glucose curves and Insulin doses were analysed for 69 patients, each serves as a separate dataset. To match the literature protocols, the pairs are taken in an orderless manner, ignoring the time series aspect of the problem. Finally, the T\"ubingen cause-effect pairs dataset by~\cite{JMLR:v17:14-518} is employed. This dataset is a collection of 100 heterogeneous, hand-collected, real-world cause-effect samples. 

The autoencoder $A$ employed in our method, Eq.~\ref{eq:objectiveEnc}, is a fully-connected five-layered neural network with three layers for the encoder and two layers for the decoder. The hyperparameters of this algorithm are the sizes of each layer, the activation function and the input dimension, i.e., length of sorted cuts (denoted by $k$ in Sec.~\ref{sec:onemethod}). Throughout the experiments, we noticed that the hyperparameter with the highest influence is the input dimension. For all datasets, results are stable in the range of $200 \leq k \leq 300$, and we, therefore, use $k=250$ throughout the experiments. For all datasets, we employed the ReLU activation function, except the T\"ubingen dataset, where the sigmoid activation function produced better results (results are also reasonable with ReLU, but not state of the art).

In addition to our method, we also present results obtained with the entropy of each individual variable as a complexity measure. This is done by binning the values of the variables into 50 bins. Other numbers of bins produce similar results.

Tab.~\ref{tab:auc} presents the mean AUC for each literature benchmark. As can be seen, the AEQ complexity measure produces reasonable results in comparison to the state of the art methods, indicating that the 1D SCM  can be overcome by comparing per-variable scores. On the popular T\"ubingen dataset, the AEQ computation outperforms all literature methods. 

Tab.~\ref{tab:quantitative_results} presents accuracy rates for various methods on the T\"ubingen dataset, where such results are often reported in the literature. As can be seen, our interaction-less method outperforms almost all other methods, including methods that employ supervised learning of the cause-effect relation.

\begin{table}
\caption{Mean AUC rates of various baselines on different one dimensional cause-effect pairs datasets. Our interaction-less AEQ algorithm achieves competitive results on most datasets.}
\label{tab:auc}
\begin{center}
 \begin{tabular}{@{}l@{~}c@{~}c@{~}c@{~}c@{~}c@{}} 
\toprule
 & CE- & CE- & CE- & T\"ubi- & Dia-\\ 
Method & Net & Gauss & Multi & ngen & betes \\ \midrule
\midrule
BivariateFit & 77.6 & 36.3 & 55.4 & 58.4 & 0.0 \\ 
 LiNGAM\citep{Shimizu:2006:LNA:1248547.1248619} & 43.7 & 66.5 & 59.3 & 39.7 & {\bf 100.0} \\ 
 CDS~\citep{Fonollosa2016ConditionalDV} & 89.5 & 84.3 & 37.2 & 59.8 & 12.0 \\ 
 IGCI~\citep{DBLP:journals/corr/abs-1203-3475} & 57.2 & 33.2 & 80.7 & 62.2 & {\bf 100.0} \\ 
 ANM~\citep{NIPS2008_3548} & 85.1 & 88.9 & 35.5 & 53.7 & 22.2 \\ 
 PNL\citep{pmlr-v6-zhang10a} & 75.5 & 83.0 & 49.0 & 68.1 & 28.1 \\ 
 GPI~\citep{NIPS2010_4173} & 88.4 & {\bf 89.1} & 65.8 & 66.4 & 92.9 \\ 
 RECI~\citep{pmlr-v84-bloebaum18a} & 60.0 & 64.2 & 85.3 & 62.6 & 95.4 \\ 
 CGNN~\citep{goudet:hal-01649153} & {\bf 89.6} &  82.9 & {\bf 96.6} & 79.8 & 34.1 \\ 
 Entropy as complexity & 49.6 & 49.7 & 50.8 & 54.5 & 53.4 \\ 
Our AEQ comparison & 62.5 & 71.0 & 96.0 & {\bf 82.8} & 95.0 \\  [1ex] 
 \bottomrule
\end{tabular}
\end{center}
\end{table}

\begin{table}[t]%
\caption{Accuracy rates of various baselines on the CE-T\"ub dataset. Our interaction-less algorithm AEQ achieves almost SOTA accuracy.}
\label{tab:quantitative_results}
\begin{minipage}{\columnwidth}
\begin{center}
 \begin{tabular}{l c c c c c} 
\toprule
Method & Supervised & Acc  \\ \midrule
LiNGAM~\citep{Shimizu:2006:LNA:1248547.1248619} & - & 44.3\%  \\ 
BivariateFit & - & 44.9\% \\ 
Entropy as a complexity measure & - & 52.5\% \\ 
IGCI~\citep{DBLP:journals/corr/abs-1203-3475} & -& 62.6\%  \\
CDS~\citep{Fonollosa2016ConditionalDV} & -& 65.5\%  \\ 
ANM~\citep{NIPS2008_3548} & -& 59.5\%  \\ 
CURE~\citep{pmlr-v38-sgouritsa15} & -& 60.0\%\footnote{The accuracy of CURE is reported on version 0.8 of the dataset in~\citep{pmlr-v38-sgouritsa15} as 75\%. In~\cite{pmlr-v84-bloebaum18a} they re-ran this algorithm and achieved an accuracy rate of around 60\%.}\\
GPI~\citep{NIPS2010_4173} & -&62.6\% \\ 
PNL~\citep{pmlr-v6-zhang10a} & -& 66.2\%  \\ 
CGNN~\citep{goudet:hal-01649153} & -& 74.4\%  \\
RECI~\citep{pmlr-v84-bloebaum18a} & -& 77.5\%  \\ 
SLOPE~\citep{marx} & -& {\bf 81.0}\% \\
Our AEQ comparison & -& 80.0\% \\  
\midrule
Jarfo~\citep{Fonollosa2016ConditionalDV} & + & 59.5\% \\
RCC~\citep{towards} & +& 75.0\%\footnote{The accuracy scores reported in~\citep{towards} are for version 0.8 of the dataset, in~\citep{ncc} they re-ran RCC~\citep{towards} on version 1.0 of the dataset.} \\
NCC~\citep{ncc} & +& 79.0\% \\
\bottomrule
\end{tabular}
\end{center}
\end{minipage}
\end{table}%

\subsection{Multivariate Data}

\begin{table}[t]%
\caption{Mean AUC rates of various baselines on different multivariate cause-effect pairs datasets. The datasets are designed and balanced, such that an autoencoder method would fail. Our method achieves SOTA results.}
\label{tab:multiv}
\begin{center}
 \begin{tabular}{@{}l@{~}c@{~}c@{~}c@{~}c@{~}c@{}} 
\toprule
Method & MCE- & MCE- & MCE- & MOUS- \\
& Poly & Net & SigMix & MEG \\  \midrule
AE reconstruction & 57.2 & 42.4 & 22.3 & 41.2 \\
\midrule
BivariateFit & 54.7 & 48.4 & 48.2 
& 44.2\\
IGCI~\citep{DBLP:journals/corr/abs-1203-3475} & 41.9 & 49.3 & 59.8 & 56.0\\
CDS~\citep{Fonollosa2016ConditionalDV} & 63.8 & 57.0 & 62.1 & 89.9\\
ANM~\citep{NIPS2008_3548} & 52.2 & 51.1 & 46.4 
& 52.4 \\ 
PNL~\citep{pmlr-v6-zhang10a} & 76.4 & 54.7 & 16.8  
& 56.3 \\ 
CGNN~\citep{goudet:hal-01649153} & 47.8  & 67.8  & 58.8  
&  40.9  \\ 
Our method & {\bf 95.3} & {\bf 84.2} & {\bf 98.5} 
& {\bf 97.7} \\
 \bottomrule
\end{tabular}
\end{center}
\end{table}%

\begin{table}
\caption{Results of various methods on different variations of the MOUS-MEG dataset. R stands for the MEG scan at rest, W stands for the word presented to the subject and A stands for the MEG scan, when the subject is active.}
\label{tab:meg}
\begin{center}
 \begin{tabular}{@{}l@{~~}c@{~~}c@{~~}c@{~~}c@{~~}c@{}} 
\toprule
Method & R + W$\to$ A & R $\to$ A  & W $\to$ A  
\\ \midrule
Expected to be causal & Yes & No & No\\
\midrule
AE reconstruction & 41.2 & 51.7 & 98.6 
\\
\midrule
BivariateFit & 44.2 & 58.1 & 0.0 
\\
IGCI~\citep{DBLP:journals/corr/abs-1203-3475} & 56.0 & 50.6 & 42.2   
\\
CDS~\citep{Fonollosa2016ConditionalDV} & 89.9  & 52.1 & 90.2
\\
ANM~\citep{NIPS2008_3548} & 52.4 & 49.3 & 0.0 
\\
PNL~\citep{pmlr-v6-zhang10a} & 56.3 & 43.7 &  0.0 
\\
CGNN~\citep{goudet:hal-01649153} & 40.9  &  52.2 & 100.0
\\
Our method & 97.7  & 44.4 & 0.0 
\\  [1ex] 
 \bottomrule
\end{tabular}
\end{center}
\end{table}%

\begin{table}[t]%
\caption{Emergence of independence. Ind C (Ind E) is the mean of $\vert c_{\textnormal{real}} + c_{\textnormal{fake}} - 1 \vert$ over all pairs of random variables, epochs and samples, when training the method from $X$ to $Y$ (vice versa). w/o backprop means without backpropagating gradients from $D$ to $R$.}
\label{tab:abl}

\begin{center}
\begin{tabular}{@{}c@{~~}c@{~~}c@{~~}c@{~~}c@{~~}c@{~~}c@{}}
\hline
\multicolumn{1}{l}{\multirow{2}{*}{}} & \multicolumn{3}{c}{Full method} & \multicolumn{3}{c}{w/o backprop} \\ 
\cmidrule(lr){2-4} 
\cmidrule(lr){5-7} 
\multicolumn{1}{c}{Dataset} & AUC  & Ind C & Ind E & AUC & Ind C & Ind E     \\ 
\cline{1-7} 
\multicolumn{1}{c}{MCE-Poly}         & 95.3 & 0.06 & \multicolumn{1}{c}{0.05} & 95.1     & 0.10     & 0.10     \\
\multicolumn{1}{c}{MCE-Net}          & 84.2 & 0.28  & \multicolumn{1}{c}{0.31}  & 65.1     & 0.55     & 0.55     \\
\multicolumn{1}{c}{MCE-SigMix}  & 98.5 & 0.05 & \multicolumn{1}{c}{0.06} & 98.8     & 0.16      & 0.20     \\
\multicolumn{1}{c} {MOUS-MEG}  &  97.7  & 0.14 & \multicolumn{1}{c}{0.14} & {80.7}  & {0.74} & {0.75} \\ \hline
\end{tabular}
\end{center}
\end{table}%

We first compare our method on several synthetic datasets. Each dataset consists of a list of pairs of real multivariate random variables $(X,Y)$ with their direction $1$ or $0$, depending on $X \to Y$ or $Y \to X$ (resp.). For each pair, we have a dataset of samples $\{(x_i,y_i)\}^{m}_{i=1}$. 

We employ five datasets, covering multiple associations. Each dataset contains 300 artificial cause-effect pairs. The cause random variable is of the form $X = h(z)$, where $h$ is some function and $z \sim \mathcal{N}(0,\sigma^2_1\cdot I_n)$. The effect is of the form $Y = g(u(X,E))$, where $E \sim \mathcal{N}(0,\sigma^2_2 \cdot I_{n})$ is independent of $X$, $u$ is a fixed function that combined the cause $X$ and the noise term $E$ and $g$ is the causal mechanism. For each dataset, the functions $h$ and $g$ are taken from the same family of causal mechanisms $\mathcal{H}$. Each pair of random variables is specified by randomly selected functions $h$ and $g$. 

The synthetic datasets extend the standard synthetic data generators of~\cite{kalainathan2019causal} to the multivariate causal pairs. 
MCE-Poly is generated element-wise polynomials composed on linear transformations as mechanisms and $u(X,E)=X+E$. MCE-Net pairs are generated using neural networks as causal mechanisms and $u$ is the concatenation operator. The mechanism in MCE-SigMix consists of linear transformation followed by element wise application of $q_{a,b,c}(x) := ab(\Tilde{x} + c) / (1 + |b \cdot (\Tilde{x} + c)|)$, where $a,b,c$ are random real valued numbers, which are sampled for each pair and $\Tilde{x} = x + e$, where $e$ is the environment random variable. In this case, $u(X,E) = X+E$. 
We noticed that a-priori, the produced datasets are imbalanced in a way that the reconstruction error of a standard autoencoder on each random variable can be employed as a score that predicts the cause variable with a high accuracy. Therefore, in order to create balanced datasets, we varied the amount of noise dimensions and their intensity, until the autoencoder reconstruction error of both $X$ and $Y$ became similar. Note that for these multivariate variables, we do not use quantiles and use the variables themsevles. As the AutoEncoder reconstruction results in Tab.~\ref{tab:multiv} show, in the MCE-SigMix dataset, balancing was only partly successful.

We compare our results to two types of baseline methods: (i) BivariateFit and ANM~\citep{NIPS2008_3548} are methods that were designed (also) for the multivariate case, (ii) CGNN~\citep{goudet:hal-01649153} and PNL~\citep{pmlr-v6-zhang10a} are naturally extended to this case. To extend the CDS~\citep{Fonollosa2016ConditionalDV} and IGCI~\citep{DBLP:journals/corr/abs-1203-3475} methods to higher dimension, we applied quantizations over the data samples, i.e., cluster the samples $\{x_i\}^{m}_{i=1}$ and $\{y_i\}^{m}_{i=1}$ using two distinct k-means with $k=10$, and then, each sample is replaced with its corresponding cluster to obtain a univariate representation of the data. After pre-processing the data, we apply the corresponding method. To select the hyperparameter $k$, we varied its value between $5$ to $500$ for different scales and found $10$ to provide the best results. RECI~\citep{pmlr-v84-bloebaum18a} could be extended. However, RECI's runtime is of order $\mathcal{O}(n^3)$, where $n$ is the input dimension. Other methods cannot be extended, or require significant modifications. For example, the SLOPE method~\citep{marx} heavily relies on the ability to order the samples of the random variables $X$ and $Y$. However, it is impossible to do so in the multivariate case. We could not find any open source implementation of the CURE algorithm~\citep{pmlr-v38-sgouritsa15}.

The results, given in Tab.~\ref{tab:multiv} show a clear advantage over the literature methods across the four datasets. The exact same architecture is used thorughout all experiments, with the same $\lambda$ parameter. See Sec.~1 of the supplementary material. A sensitivity analysis (see supplementary Sec.~2) shows that our results are better than all baseline methods, regardless of the parameter $\lambda$.

In addition to the synthetic datasets, we also employ the MOUS-MEG real world dataset, provided to us by the authors of~\citep{anonymous2020measuring}. This dataset is part of Mother Of Unification Studies (MOUS) dataset~\citep{schoffelen2019204}. This dataset contains magneto-encephalography (MEG) recordings of 102 healthy Dutch-speaking subjects performing a reading task (9 of them were excluded due to corrupted data). Each subject was asked to read 120 sentences in Dutch, both in the right order and randomly mixed order, which adds up to a total of over 1000 words. Each word was presented on the computer screen for 351ms on average and was separated from the next word by 3-4 seconds. Each time step consists of 301 MEG readings of the magnetometers, attached to different parts of the head. For more information see~\citep{schoffelen2019204}. For each pair $(X,Y)$, $X$ is the interval $[-1.5s,-0.5s]$ relative to the word onset concatenated with the word embedding (using the spaCy python module with the Dutch language model), this presents the subject in his ``rest'' state (i.e. the cause). $Y$ is the interval $[0,1.0s]$ relative to the word onset, which presents the subject in his ``active'' state (i.e. the effect). 

To validate the soundness of the dataset, we ran a few experiments on variations of the dataset and report the results as additional columns in Tab.~\ref{tab:meg}.
As can be seen, a dataset where the cause consists of the word embedding and the effect consists of the subject's ``active'' state is highly imbalanced. This is reasonable, since the word embedding and the MEG readings are encoded differently and are of different dimensions. In addition, when the cause is selected to be the ``rest'' state and the effect is the ``active'' state, the various algorithms are unable to infer which side is the cause and which one is the effect, since the word is missing. Finally, when considering the Rest+Word $\to$ Active variation, the relationship is expected to be causal, the AE reconstruction indicates that the dataset is balanced, and our method is the only one to achieve a high AUC rate.

{\bf Emergence of independence\quad} To check the importance of our adversarial loss in identifying the direction of causality and capturing the implicit independent representation $f(X)$ and $E$, we applied our method without training $R$ against the discriminator. Therefore, in this case, the discriminator only serves as a test whether $X$ and $R(Y)$ are independent or not and does not contribute to the training loss of $R$ ($\lambda=0$).  

As mentioned in Sec.~\ref{sec:methodCI}, the distance between $c_{\textnormal{real}} + c_{\textnormal{fake}}$ to $1$ indicates the amount of dependence between $X$ and $R(Y)$. We denote by $\textnormal{Ind C}$ the mean values of $\vert c_{\textnormal{real}} + c_{\textnormal{fake}} - 1\vert$ over all pairs of random variables and samples when training our method in the causal direction. The same mean score when training in the anti-causal direction is denoted $\textnormal{Ind E}$. As is evident from Tab.~\ref{tab:abl}, the independence is similar between the two directions, emphasizing the importance of the reconstruction error in the score.

As can be seen in Tab.~\ref{tab:abl}, the adversarial loss improves the results when there is no implicit emergence of independence. However, in cases where there is emergence of independence, the results are similar. We noticed that the values of  $\textnormal{Ind C}$ and $\textnormal{Ind E}$ are smaller for the full method. However, in MCE-Poly and MCE-SigMix they are still very small and, therefore, there is implicit emergence of independence between $X$ and $R(Y)$, even without explicitly training $R(Y)$ to be independent of $X$.

\section{Summary}

We discover an inbalance in the complexities of cause and effect in the univariate SCM and suggest a method to exploit it. Since the method does not consider the interactions between the variables, its success in predicting cause and effect indicates an inherent bias in the univariate datasets. Turning our attention to the multivariate case, where the complexity can be actively balanced, we propose a new method in which the learned networks model the underlying SCM itself. Since the noise term $E$ is unknown, we replace it by a function of $Y$ that is enforced to be independent of $X$. We also show that under reasonable conditions, the independence emerges, even without explicitly enforcing it.

\bibliography{refs}

\begin{thebibliography}{32}
\providecommand{\natexlab}[1]{#1}
\providecommand{\url}[1]{\texttt{#1}}
\expandafter\ifx\csname urlstyle\endcsname\relax
  \providecommand{\doi}[1]{doi: #1}\else
  \providecommand{\doi}{doi: \begingroup \urlstyle{rm}\Url}\fi

\bibitem[Arjovsky et~al.(2017)Arjovsky, Chintala, and
  Bottou]{DBLP:conf/icml/ArjovskyCB17}
Arjovsky, M., Chintala, S., and Bottou, L.
\newblock Wasserstein generative adversarial networks.
\newblock In \emph{Proceedings of the 34th International Conference on Machine
  Learning, {ICML} 2017}, pp.\  214--223, 2017.

\bibitem[Bach \& Jordan(2003)Bach and Jordan]{Bach:2003:KIC:944919.944920}
Bach, F.~R. and Jordan, M.~I.
\newblock Kernel independent component analysis.
\newblock \emph{J. Mach. Learn. Res.}, 3:\penalty0 1--48, March 2003.
\newblock ISSN 1532-4435.
\newblock \doi{10.1162/153244303768966085}.
\newblock URL \url{https://doi.org/10.1162/153244303768966085}.

\bibitem[Bloebaum et~al.(2018)Bloebaum, Janzing, Washio, Shimizu, and
  Schoelkopf]{pmlr-v84-bloebaum18a}
Bloebaum, P., Janzing, D., Washio, T., Shimizu, S., and Schoelkopf, B.
\newblock Cause-effect inference by comparing regression errors.
\newblock In Storkey, A. and Perez-Cruz, F. (eds.), \emph{Proceedings of the
  Twenty-First International Conference on Artificial Intelligence and
  Statistics}, volume~84 of \emph{Proceedings of Machine Learning Research},
  pp.\  900--909, Playa Blanca, Lanzarote, Canary Islands, 09--11 Apr 2018.
  PMLR.

\bibitem[Brakel \& Bengio(2017)Brakel and Bengio]{brakel}
Brakel, P. and Bengio, Y.
\newblock Learning independent features with adversarial nets for non-linear
  ica, 2017.

\bibitem[Chazelle(2000)]{10.5555/507108}
Chazelle, B.
\newblock \emph{The Discrepancy Method: Randomness and Complexity}.
\newblock Cambridge University Press, USA, 2000.
\newblock ISBN 0521003571.

\bibitem[Daniusis et~al.(2012)Daniusis, Janzing, Mooij, Zscheischler, Steudel,
  Zhang, and Sch{\"{o}}lkopf]{DBLP:journals/corr/abs-1203-3475}
Daniusis, P., Janzing, D., Mooij, J.~M., Zscheischler, J., Steudel, B., Zhang,
  K., and Sch{\"{o}}lkopf, B.
\newblock Inferring deterministic causal relations.
\newblock \emph{CoRR}, 2012.

\bibitem[Fonollosa(2016)]{Fonollosa2016ConditionalDV}
Fonollosa, J. A.~R.
\newblock Conditional distribution variability measures for causality
  detection.
\newblock \emph{ArXiv}, abs/1601.06680, 2016.

\bibitem[Frank \& Asuncion(2010)Frank and Asuncion]{FrankAsuncion2010}
Frank, A. and Asuncion, A.
\newblock {UCI} machine learning repository, 2010.
\newblock http://archive.ics.uci.edu/ml.

\bibitem[Goudet et~al.(2018)Goudet, Kalainathan, Caillou, Lopez-Paz, Guyon, and
  Sebag]{goudet:hal-01649153}
Goudet, O., Kalainathan, D., Caillou, P., Lopez-Paz, D., Guyon, I., and Sebag,
  M.
\newblock Learning functional causal models with generative neural networks.
\newblock In \emph{{Explainable and Interpretable Models in Computer Vision and
  Machine Learning}}, Springer Series on Challenges in Machine Learning.
  {Springer International Publishing}, 2018.

\bibitem[Gretton et~al.(2005)Gretton, Herbrich, Smola, Bousquet, and
  Sch\"{o}lkopf]{Gretton:2005:KMM:1046920.1194914}
Gretton, A., Herbrich, R., Smola, A., Bousquet, O., and Sch\"{o}lkopf, B.
\newblock Kernel methods for measuring independence.
\newblock \emph{J. Mach. Learn. Res.}, 6:\penalty0 2075--2129, December 2005.
\newblock ISSN 1532-4435.
\newblock URL \url{http://dl.acm.org/citation.cfm?id=1046920.1194914}.

\bibitem[Heinze-Deml et~al.(2017)Heinze-Deml, Peters, and
  Meinshausen]{HeinzeDeml2017InvariantCP}
Heinze-Deml, C., Peters, J., and Meinshausen, N.
\newblock Invariant causal prediction for nonlinear models.
\newblock \emph{Journal of Causal Inference}, 6, 2017.

\bibitem[Hoyer et~al.(2009)Hoyer, Janzing, Mooij, Peters, and
  Sch\"{o}lkopf]{NIPS2008_3548}
Hoyer, P.~O., Janzing, D., Mooij, J.~M., Peters, J., and Sch\"{o}lkopf, B.
\newblock Nonlinear causal discovery with additive noise models.
\newblock In Koller, D., Schuurmans, D., Bengio, Y., and Bottou, L. (eds.),
  \emph{Advances in Neural Information Processing Systems 21}, pp.\  689--696.
  Curran Associates, Inc., 2009.

\bibitem[Kalainathan \& Goudet(2019)Kalainathan and
  Goudet]{kalainathan2019causal}
Kalainathan, D. and Goudet, O.
\newblock Causal discovery toolbox: Uncover causal relationships in python,
  2019.

\bibitem[King et~al.(2020)King, Charton, Oquab, and
  Lopez-Paz]{anonymous2020measuring}
King, J.-R., Charton, F., Oquab, M., and Lopez-Paz, D.
\newblock Measuring causal influence with back-to-back regression: the linear
  case, 2020.
\newblock URL \url{https://openreview.net/forum?id=B1lKDlHtwS}.

\bibitem[Lopez-Paz et~al.(2015)Lopez-Paz, Muandet, Schölkopf, and
  Tolstikhin]{towards}
Lopez-Paz, D., Muandet, K., Schölkopf, B., and Tolstikhin, I.
\newblock Towards a learning theory of cause-effect inference.
\newblock In Bach, F. and Blei, D. (eds.), \emph{Proceedings of the 32nd
  International Conference on Machine Learning}, volume~37 of \emph{Proceedings
  of Machine Learning Research}, pp.\  1452--1461, Lille, France, 07--09 Jul
  2015. PMLR.

\bibitem[Lopez-Paz et~al.(2017)Lopez-Paz, Nishihara, Chintala, Sch{\"o}lkopf,
  and Bottou]{ncc}
Lopez-Paz, D., Nishihara, R., Chintala, S., Sch{\"o}lkopf, B., and Bottou, L.
\newblock Discovering causal signals in images.
\newblock In \emph{Proceedings IEEE Conference on Computer Vision and Pattern
  Recognition (CVPR) 2017}, pp.\  58--66, Piscataway, NJ, USA, July 2017. IEEE.

\bibitem[Mansour et~al.(2009)Mansour, Mohri, and
  Rostamizadeh]{conf/colt/MansourMR09}
Mansour, Y., Mohri, M., and Rostamizadeh, A.
\newblock Domain adaptation: Learning bounds and algorithms.
\newblock In \emph{COLT}, 2009.

\bibitem[{Marx} \& {Vreeken}(2017){Marx} and {Vreeken}]{marx}
{Marx}, A. and {Vreeken}, J.
\newblock Telling cause from effect using mdl-based local and global
  regression.
\newblock In \emph{2017 IEEE International Conference on Data Mining (ICDM)},
  pp.\  307--316, Nov 2017.

\bibitem[Mooij et~al.(2016)Mooij, Peters, Janzing, Zscheischler, and
  Sch{{\"o}}lkopf]{JMLR:v17:14-518}
Mooij, J.~M., Peters, J., Janzing, D., Zscheischler, J., and Sch{{\"o}}lkopf,
  B.
\newblock Distinguishing cause from effect using observational data: Methods
  and benchmarks.
\newblock \emph{Journal of Machine Learning Research}, 17\penalty0
  (32):\penalty0 1--102, 2016.

\bibitem[M\"uller(1997)]{muller}
M\"uller, A.
\newblock Integral probability metrics and their generating classes of
  functions advances in applied probability.
\newblock In \emph{Advances in Applied Probability}, pp.\  429--–443, 1997.

\bibitem[Pearl(2009)]{pearl}
Pearl, J.
\newblock \emph{Causality: Models, Reasoning and Inference}.
\newblock Cambridge University Press, New York, NY, USA, 2nd edition, 2009.
\newblock ISBN 052189560X, 9780521895606.

\bibitem[Peters et~al.(2017)Peters, Janzing, and Sch{\"o}lkopf]{PetJanSch17}
Peters, J., Janzing, D., and Sch{\"o}lkopf, B.
\newblock \emph{Elements of Causal Inference - Foundations and Learning
  Algorithms}.
\newblock Adaptive Computation and Machine Learning Series. The MIT Press,
  Cambridge, MA, USA, 2017.

\bibitem[Press et~al.(2019)Press, Galanti, Benaim, and Wolf]{press2018emerging}
Press, O., Galanti, T., Benaim, S., and Wolf, L.
\newblock Emerging disentanglement in auto-encoder based unsupervised image
  content transfer.
\newblock In \emph{International Conference on Learning Representations}, 2019.

\bibitem[Schoffelen et~al.(2019)Schoffelen, Oostenveld, Lam, Udd{\'e}n,
  Hult{\'e}n, and Hagoort]{schoffelen2019204}
Schoffelen, J.-M., Oostenveld, R., Lam, N.~H., Udd{\'e}n, J., Hult{\'e}n, A.,
  and Hagoort, P.
\newblock A 204-subject multimodal neuroimaging dataset to study language
  processing.
\newblock \emph{Scientific data}, 6\penalty0 (1):\penalty0 17, 2019.

\bibitem[Sgouritsa et~al.(2015)Sgouritsa, Janzing, Hennig, and
  Schölkopf]{pmlr-v38-sgouritsa15}
Sgouritsa, E., Janzing, D., Hennig, P., and Schölkopf, B.
\newblock {Inference of Cause and Effect with Unsupervised Inverse Regression}.
\newblock In Lebanon, G. and Vishwanathan, S. V.~N. (eds.), \emph{Proceedings
  of the Eighteenth International Conference on Artificial Intelligence and
  Statistics}, volume~38 of \emph{Proceedings of Machine Learning Research},
  pp.\  847--855, San Diego, California, USA, 09--12 May 2015. PMLR.

\bibitem[Shimizu et~al.(2006)Shimizu, Hoyer, Hyv\"{a}rinen, and
  Kerminen]{Shimizu:2006:LNA:1248547.1248619}
Shimizu, S., Hoyer, P.~O., Hyv\"{a}rinen, A., and Kerminen, A.
\newblock A linear non-gaussian acyclic model for causal discovery.
\newblock \emph{J. Mach. Learn. Res.}, 7:\penalty0 2003--2030, December 2006.
\newblock ISSN 1532-4435.

\bibitem[Shimizu et~al.(2011)Shimizu, Inazumi, Sogawa, Hyv\"{a}rinen, Kawahara,
  Washio, Hoyer, and Bollen]{Shimizu:2011:DDM:1953048.2021040}
Shimizu, S., Inazumi, T., Sogawa, Y., Hyv\"{a}rinen, A., Kawahara, Y., Washio,
  T., Hoyer, P.~O., and Bollen, K.
\newblock Directlingam: A direct method for learning a linear non-gaussian
  structural equation model.
\newblock \emph{J. Mach. Learn. Res.}, 12:\penalty0 1225--1248, July 2011.
\newblock ISSN 1532-4435.
\newblock URL \url{http://dl.acm.org/citation.cfm?id=1953048.2021040}.

\bibitem[Silvester(1999)]{Silvester_determinantsof}
Silvester, J.~R.
\newblock Determinants of block matrices.
\newblock \emph{The Mathematical Gazette}, pp.\  2000, 1999.

\bibitem[Spirtes et~al.(2000)Spirtes, Glymour, and Scheines]{Spirtes2000}
Spirtes, P., Glymour, C., and Scheines, R.
\newblock \emph{Causation, Prediction, and Search}.
\newblock MIT press, 2nd edition, 2000.

\bibitem[Stegle et~al.(2010)Stegle, Janzing, Zhang, Mooij, and
  Sch\"{o}lkopf]{NIPS2010_4173}
Stegle, O., Janzing, D., Zhang, K., Mooij, J.~M., and Sch\"{o}lkopf, B.
\newblock Probabilistic latent variable models for distinguishing between cause
  and effect.
\newblock In Lafferty, J.~D., Williams, C. K.~I., Shawe-Taylor, J., Zemel,
  R.~S., and Culotta, A. (eds.), \emph{Advances in Neural Information
  Processing Systems 23}, pp.\  1687--1695. Curran Associates, Inc., 2010.

\bibitem[Zhang \& Hyvärinen(2010)Zhang and Hyvärinen]{pmlr-v6-zhang10a}
Zhang, K. and Hyvärinen, A.
\newblock Distinguishing causes from effects using nonlinear acyclic causal
  models.
\newblock In Guyon, I., Janzing, D., and Schölkopf, B. (eds.),
  \emph{Proceedings of Workshop on Causality: Objectives and Assessment at NIPS
  2008}, volume~6 of \emph{Proceedings of Machine Learning Research}, pp.\
  157--164, Whistler, Canada, 12 Dec 2010. PMLR.

\bibitem[Zhang et~al.(2011)Zhang, Peters, Janzing, and
  Sch\"{o}lkopf]{Zhang:2011:KCI:3020548.3020641}
Zhang, K., Peters, J., Janzing, D., and Sch\"{o}lkopf, B.
\newblock Kernel-based conditional independence test and application in causal
  discovery.
\newblock In \emph{Proceedings of the Twenty-Seventh Conference on Uncertainty
  in Artificial Intelligence}, UAI'11, pp.\  804--813, Arlington, Virginia,
  United States, 2011. AUAI Press.
\newblock ISBN 978-0-9749039-7-2.
\newblock URL \url{http://dl.acm.org/citation.cfm?id=3020548.3020641}.

\end{thebibliography}
\bibliographystyle{icml2020}

\end{document}


\twocolumn[
\icmltitle{Supplementary Material \\ 
            A Critical View of the Structural Causal Model}

\icmlsetsymbol{equal}{*}

\begin{icmlauthorlist}
\icmlauthor{anonymous author}{anon}
\end{icmlauthorlist}

\icmlaffiliation{anon}{anonymous}

\icmlcorrespondingauthor{anonymous author}{anonymous@anon.com}

\icmlkeywords{}

\vskip 0.3in
]

\printAffiliationsAndNotice{\icmlEqualContribution} 

\section{Architecture for All Multivariate Experiments}\label{app:arch}

The functions $G$, $F$, $R$ and $D$ in the adversarial multivariate method are fully connected neural networks and their architectures are as follows: $F$ is a 2-layered network with dimensions $100 \to 60 \to 50$, $R$ is a 3-layered network with dimensions $100 \to 50 \to 50 \to 20$, $G$ is a 2-layers neural network with dimensions $50+20 \to 80 \to 100$ (the input has 50 dimensions for $F(X)$ and 20 for $R(Y)$). The discriminator is a 3-layers network with dimensions $100+20 \to 60 \to 50 \to 2$ (the input is the concatenation of $X$ and $R(Y)$). The activation function in all networks is the sigmoid function except the discriminator that applies the leaky ReLU activation. For all networks, the activation is not applied at the output layer. 

Throughout the experiments the learning rate for training $G$, $F$ and $R$ is 0.01 and the learning rate of $D$ is 0.001.

\section{Sensitivity Experiment}

To check that our results are robust with respect to $\lambda$, we conducted a sensitivity analysis. In this experiment we ran our algorithm on the MOUS-MEG dataset (i.e., Rest + Word $\to$ Active variation) with $\lambda$ that varies between $10^{-5}$ to 1 in a logarithmic scale. As can be seen in Fig.~\ref{fig:sens_exp}, our algorithm is highly stable to the selection of $\lambda \in [10^{-5},10^{-1}]$. The performance decays (gradually) only for $\lambda \geq 0.1$. 

\begin{figure}
\begin{center}
  \includegraphics[width=1.1\linewidth]{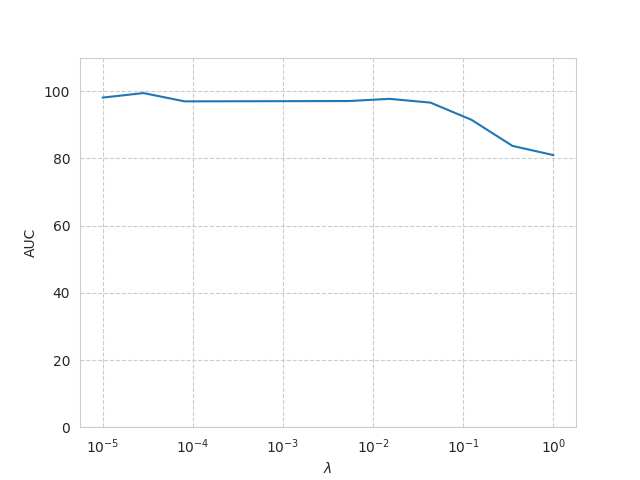}
\end{center}
  \caption{Sensitivity experiment. The graph presents the AUC of our algorithm on MOUS-MEG dataset with $\lambda$, which varies between $10^{-5}$ to 1 in a logarithmic scale.}
\label{fig:sens_exp}
\end{figure}

\section{Analysis}

\subsection{Terminology and Notations}

We recall some relevant notations and terminology. For a vector $x = (x_1,\dots,x_n) \in \mathbb{R}^n$ we denote $\|x\|_2 := \sqrt{\sum^{n}_{i=1} x^2_i}$ the Euclidean norm of $x$. For a differentiable function $f : \mathbb{R}^{m} \rightarrow \mathbb{R}^{n}$ and $x \in \mathbb{R}^m$, we denote by 
\begin{equation}
\Diff(f(x)) := \left(\frac{\partial f_i}{\partial \zeta_j}(x)\right)_{i \in [n],j \in [m]}    
\end{equation}
the Jacobian matrix of $f$ in $x$. For a twice differentiable function $f : \mathbb{R}^{m} \rightarrow \mathbb{R}$, we denote by 
\begin{equation}
\Hess(f(x)) := \left(\frac{\partial^2 f}{\partial \zeta_i \partial \zeta_j}(x)\right)_{i,j \in [m]}    
\end{equation}
the Hessian matrix of $f$ in $x$. Additionally, for a twice differentiable function $f : \mathbb{R}^{m} \rightarrow \mathbb{R}^n$, $f(x) = (f_1(x),\dots,f_n(x))$, we denote the Hessian of $f$ by $\Hess(f(x)) := (\Hess(f_1(x)),\dots,\Hess(f_n(x)))$. For a scalar function $f:\mathbb{R}^m \to \mathbb{R}$ instead of using the Jacobian notation, the gradient notation will be employed, $\nabla (f(x)) := \Diff(f(x))$. For two positive functions $f(x)$ and $g(x)$, we denote, $f(x) \lesssim g(x)$ if there is a constant $C>0$, such that, $f(x) \leq C \cdot g(x)$.

\subsection{Proofs for the Results}\label{app:proofs}

In this section we provide the proofs of the main results in the paper.

\begin{restatable}{lemma}{compautoenc}\label{lem:compAuto}
Let $\{\mathcal{H}^d\}^{\infty}_{d=1}$ be a family of classes of autoencoders that is closed to fixations. The function $C_{\mathcal{F}}(X)$ is a proper complexity measure.
\end{restatable}

\begin{proof}
First, since $\ell(a,b) \geq 0$ for all $a,b \in \mathbb{R}^k$, this function is non-negative. Next, we would like to show that $C_{\mathcal{F}}(X,Y) \geq \max(C_{\mathcal{F}}(X),C_{\mathcal{F}}(Y))$. Let $A^*$ be the minimizer of $\mathbb{E}_{x\sim X}\left[\ell(A^*(x),x) \right]$ within $\mathcal{H}^{d_1+d_2}$. We consider that there is a vector $y^*$, such that,
\begin{equation}
\begin{aligned}
\mathbb{E}_{(x,y) \sim (X,Y)}&\left[\ell(A(x,y),(x,y)) \right]  \\
\geq &\mathbb{E}_{y \sim Y}\mathbb{E}_{x\sim X}\left[\ell(A(x,y),(x,y))  \right] \\
\geq &\mathbb{E}_{x\sim X}\left[\ell(A(x,y^*),(x,y^*)) \right] \\
\geq &\mathbb{E}_{x\sim X}\left[\ell(A(x,y^*)_{1:d_1},x)\right]
\end{aligned}
\end{equation}
We note that $A(x,y^*)_{1:d_1} \in \mathcal{H}^{d_1}$. Therefore, \begin{equation}
\begin{aligned}
&\mathbb{E}_{(x,y)\sim (X,Y)}\left[\ell(A(x,y),(x,y)) \right] \\
&\geq \min_{A^* \in \mathcal{H}^{d_1}}\mathbb{E}_{x\sim X}\left[\ell(A^*(x),x)\right]
= C_{\mathcal{F}}(X)
\end{aligned}
\end{equation}
By similar considerations, $C_{\mathcal{F}}(X,Y)$.
\end{proof}

\begin{restatable}{lemma}{complexityMulti} Let $C$ be a complexity measure of multivariate random variables (i.e, non-negative and satisfies $C(X,Y) \geq \max\{C(X),C(Y)\}$). Then, there are triplets of random variables $(X,E,Y)$ and $(\hat{X},E,Y)$ and functions $g$ and $g'$, such that, $Y = g(X,E)$, $Y = g'(\hat{X},E)$, $C(X) < C(Y)$ and $C(\hat{X}) > C(Y)$. Therefore, $C$ cannot serve as a score for causal inference.
\end{restatable}

\begin{proof}
Let $X$ be a random variable and $E \indep X$, such that, $Y = g(X,E)$. Assume that $C(X) < C(Y)$. Then, let $X'$ be a random variable independent of $X$, such that, $C(X') > C(Y)$. Then, according to the definition of a complexity measure, we have: $C(X,X') > C(Y)$ and we have: $Y = g'(X,X',E)$, for $g'(a,b,c) = g(a,c)$. 
\end{proof}

The following lemma is an extension of Thm.~1 in~\citep{pmlr-v6-zhang10a} to real valued random variables of dimension $>1$.
\begin{lemma}\label{lem:identif}
Assume that $(X, Y)$ can be described by both:
\begin{equation}
Y = g_1(f_1(X)+E_1), \textnormal{ s.t: } X \indep E_1 \textnormal{ and $g_1$ is invertible}
\end{equation}
and
\begin{equation}
X = g_2(f_2(Y)+E_2), \textnormal{ s.t: } Y \indep E_2 \textnormal{ and $g_2$ is invertible}
\end{equation}
Assume that $g_1$ and $g_2$ are invertible and let:
\begin{equation}
\begin{aligned}
T_1 &:= g^{-1}_1(Y) \textnormal{ and } h_1 := f_2 \circ g_1 \\ 
T_2 &:= g^{-1}_2(X) \textnormal{ and } h_2 := f_1 \circ g_2   
\end{aligned}
\end{equation}
Assume that the involved densities $p_{T_2}$, $p_{E_1}$ and nonlinear functions $f_1,g_1$ and $f_2,g_2$ are third order differentiable. We then have the following equations for all $(X,Y)$ satisfying:
\begin{equation}
\begin{aligned}
&\textnormal{H}(\eta_1(t_2))\cdot \textnormal{J}(h_1(t_1)) - \textnormal{H}(\eta_2(e_1)) \cdot \textnormal{J}(h_2(t_2)) \\
&+ \textnormal{H}(\eta_2(e_1))\cdot \textnormal{J}(h_2(t_2)) \cdot \textnormal{J}(h_1(t_1)) \cdot \textnormal{J}(h_2(t_2))\\ 
&-\nabla(\eta_2(e_1)) \cdot \textnormal{H}(h_2(t_2)) \cdot \textnormal{J}(h_1(t_1)) = 0
\end{aligned}
\end{equation}
where $\eta_1(t_2) := \log p_{T_2}(t_2)$ and $\eta_2(e_1) := \log p_{E_1}(e_1)$.
\end{lemma}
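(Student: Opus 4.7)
The plan is to mirror the strategy of the one-dimensional PNL identifiability argument in \cite{pmlr-v6-zhang10a}, lifting the bookkeeping to the multivariate chain rule and to Hessian tensors of vector-valued maps. The proof will work in the $(T_1, T_2)$ coordinates, where both assumed structural representations become additive-noise relations. Substituting $X = g_2(T_2)$ and $Y = g_1(T_1)$ into the two models, the forward representation becomes $T_1 = h_2(T_2) + E_1$ with $T_2 \indep E_1$ (because $T_2$ is a measurable invertible function of $X$ and $X \indep E_1$ by hypothesis), while the reverse representation becomes $T_2 = h_1(T_1) + E_2$ with $T_1 \indep E_2$. Thus both independence hypotheses are reduced to additive-noise statements for the $T_i$'s.

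Next I will write the joint log-density in two ways. The first independence gives $p_{T_1, T_2}(t_1, t_2) = p_{T_2}(t_2)\, p_{E_1}(t_1 - h_2(t_2))$, so $\log p_{T_1, T_2}(t_1, t_2) = \eta_1(t_2) + \eta_2(t_1 - h_2(t_2))$. Performing the unit-Jacobian change of variables $(T_1, T_2) \mapsto (T_1, E_2) := (T_1, T_2 - h_1(T_1))$, I obtain
\begin{equation*}
\log p_{T_1, E_2}(t_1, e_2) = \eta_1(u) + \eta_2(v),\qquad u := e_2 + h_1(t_1) = t_2,\ v := t_1 - h_2(u) = e_1.
\end{equation*}
The second independence $T_1 \indep E_2$ is equivalent, under the stated regularity, to the vanishing of the cross block of the log-density Hessian, $\partial^2 \log p_{T_1, E_2} / \partial t_1\, \partial e_2^{\top} \equiv 0$ for all $(t_1, e_2)$.

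The PDE is then obtained by computing this mixed partial with the multivariate chain rule. Using $\partial_{e_2} u = I$, $\partial_{e_2} v = -\Diff(h_2(u))$, $\partial_{t_1} u = \Diff(h_1(t_1))$ and $\partial_{t_1} v = I - \Diff(h_2(u))\Diff(h_1(t_1))$, the first derivative is $\partial_{e_2} \log p = \nabla \eta_1(u) - \nabla \eta_2(v) \cdot \Diff(h_2(u))$. Differentiating again with respect to $t_1$ and applying the product rule produces exactly four contributions. The term $\partial_{t_1} \nabla \eta_1(u)$ yields $\Hess(\eta_1(t_2)) \Diff(h_1(t_1))$; the term $\partial_{t_1}(\nabla \eta_2(v)) \cdot \Diff(h_2(u))$, expanded via $\partial_{t_1} v$, produces $-\Hess(\eta_2(e_1)) \Diff(h_2(t_2))$ and $\Hess(\eta_2(e_1)) \Diff(h_2(t_2)) \Diff(h_1(t_1)) \Diff(h_2(t_2))$; and the term $\nabla \eta_2(v) \cdot \partial_{t_1} \Diff(h_2(u))$ gives $-\nabla \eta_2(e_1) \Hess(h_2(t_2)) \Diff(h_1(t_1))$, where $\Hess(h_2)$ is the three-index Hessian tensor of the vector-valued $h_2$ from the appendix. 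Summing the four terms and setting the result to zero gives the stated identity.

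The main obstacle is the tensor bookkeeping in the last term: $\Hess(h_2)$ is a $d_1\times d_2 \times d_2$ object (the stack of component Hessians), which must be contracted on one side against the row gradient $\nabla \eta_2(e_1)$ and on the other against $\Diff(h_1(t_1))$, and one must verify that the resulting matrix dimensions line up with the three preceding terms so that the identity holds as an equation of matrices of the same shape. In the 1D case of \cite{pmlr-v6-zhang10a} this collapses to ordinary second derivatives; in the multivariate setting the only non-routine step is aligning these contractions correctly, after which the identity follows directly from the chain rule and from the factorization of $p_{T_1, E_2}$ implied by $T_1 \indep E_2$.
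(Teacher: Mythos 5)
Your proposal is correct and follows essentially the same route as the paper: reduce both models to additive-noise relations in the $(T_1,T_2)$ coordinates, show via a unit-Jacobian change of variables that $\log p_{T_1,E_2}(t_1,e_2)=\eta_1(t_2)+\eta_2(e_1)$, and then obtain the PDE by setting the mixed partial $\partial^2\log p_{T_1,E_2}/\partial e_2\,\partial t_1$ to zero, which is forced by $T_1 \indep E_2$. The only (cosmetic) difference is that you establish the unit Jacobian by factoring the coordinate change into two triangular steps, whereas the paper computes the block determinant of the single map $(E_2,T_1)\mapsto(E_1,T_2)$ directly via Silvester's block-determinant theorem.
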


\begin{proof}
The proof is an extension of the proof of Thm.~1 in~\citep{pmlr-v6-zhang10a}. We define:
\begin{equation}
\begin{aligned}
T_1 &:= g^{-1}_1(Y) \textnormal{ and } h_1 := f_2 \circ g_1 \\ 
T_2 &:= g^{-1}_2(X) \textnormal{ and } h_2 := f_1 \circ g_2
\end{aligned}
\end{equation}
Since $g_2$ is invertible, the independence between $X$ and $E_1$ is equivalent to the independence between $T_2$ and $E_1$. Similarly, the independence between $Y$ and $E_2$ is equivalent to the independence between $T_1$ and $E_2$. Consider the transformation $F:(E_2,T_1)\mapsto (E_1,T_2)$:
\begin{equation}
\begin{aligned}
E_1 &= T_1 - f_1(X) = T_1 - f_1(g_2(T_2)) \\
T_2 &= f_2(Y) + E_2 = f_2(g_1(T_1)) + E_2 
\end{aligned}
\end{equation}
The Jacobian matrix of this transformation is given by:
\begin{equation}
\begin{aligned}
\textnormal{J} &:= \textnormal{J}(F(e_2,t_1)) \\
&=
\left[
\begin{array}{c|c}
-\textnormal{J}(h_2(t_2))  & I - \textnormal{J}(h_2(t_2)) \cdot \textnormal{J}(h_1(t_1)) \\
\hline
I & \textnormal{J}(h_1(t_1))
\end{array}
\right]
\end{aligned}
\end{equation}
Since $I$ commutes with any matrix, by Thm.~3 in~\citep{Silvester_determinantsof}, we have:
\begin{equation}
\begin{aligned}
&\Big\vert \det(\textnormal{J}(F(E_2,T_1))) \Big\vert \\
=& \Bigg\vert \det\Big(-\textnormal{J}(h_2(T_2)) \cdot \textnormal{J}(h_1(T_1)) \\
&\;\;\;\; - I \cdot ( I - \textnormal{J}(h_2(T_2)) \cdot \textnormal{J}(h_1(T_1)) ) \Big) \Bigg\vert = 1
\end{aligned}
\end{equation}
Therefore, we have: $p_{T_2}(t_2) \cdot p_{E_1}(e_1) = p_{T_1,E_2}(t_1,e_2)/\vert \det\textbf{J} \vert = p_{T_1,E_2}(t_1,e_2)$. Hence, $\log (p_{T_1,E_2}(t_1,e_2)) = \eta_1(t_2) + \eta_2(e_1)$ and we have:
\begin{equation}
\begin{aligned}
\frac{\partial \log (p_{T_1,E_2}(t_1,e_2))}{ \partial e_2} &= \nabla \eta_1(t_2)  - \nabla \eta_2(e_1) \cdot \textnormal{J}(h_2(t_2))
\end{aligned}
\end{equation}
Therefore, 
\begin{equation}
\begin{aligned}
&\frac{\partial^2 \log (p_{T_1,E_2}(t_1,e_2))}{\partial e_2 \partial t_1} \\
=& \textnormal{H}(\eta_1(t_2))\cdot \textnormal{J}(h_1(t_1)) \\
&- \textnormal{H}(\eta_2(e_1)) \cdot \left( I - \textnormal{J}(h_2(t_2)) \cdot \textnormal{J}(h_1(t_1)) \right) \cdot \textnormal{J}(h_2(t_2)) \\
&-\nabla(\eta_2(e_1)) \cdot \textnormal{H}(h_2(t_2)) \cdot \textnormal{J}(h_1(t_1))\\
=&\textnormal{H}(\eta_1(t_2))\cdot \textnormal{J}(h_1(t_1)) - \textnormal{H}(\eta_2(e_1)) \cdot \textnormal{J}(h_2(t_2)) \\
&+ \textnormal{H}(\eta_2(e_1))\cdot \textnormal{J}(h_2(t_2)) \cdot \textnormal{J}(h_1(t_1)) \cdot \textnormal{J}(h_2(t_2))\\ 
&-\nabla(\eta_2(e_1)) \cdot \textnormal{H}(h_2(t_2)) \cdot \textnormal{J}(h_1(t_1))\\
\end{aligned}
\end{equation}
The independence between $T_1$ and $E_2$ implies that for every possible $(t_1,e_2)$, we have: $\frac{\partial^2 \log p_{T_1,E_2}(t_1,e_2)}{\partial e_2 \partial t_1}=0$. 
\end{proof}

\begin{lemma}[Reduction to post-linear models]\label{lem:reduce}
Let $f(x) = \sigma_1(W_d \dots \sigma_1(W_1x))$ and $g(u,v) = \sigma_2(U_k \dots \sigma_2(U_1(u,v) ))$ be two neural networks. Then, if $Y = g(f(X),E)$ for some $E \indep X$, we can represent $Y = \hat{g}(\hat{f}(X)+N)$ for some $N \indep X$.
\end{lemma}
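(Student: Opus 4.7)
The plan is to exploit the fact that the first layer of $g$ is affine in its concatenated input $(u,v)$, so the contributions of $u = f(X)$ and $v = E$ enter additively. This means we can split the first linear layer and absorb the cause-side portion into a new function $\hat f$ and the noise-side portion into a new noise variable $N$, after which the remaining layers of $g$ form the outer nonlinearity $\hat g$.

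Concretely, I would proceed as follows. First, write the first weight matrix $U_1$ in block form $U_1 = [U_1^{(u)} \mid U_1^{(v)}]$ conformal to the dimensions of $u$ and $v$, so that
\begin{equation}
U_1(u,v) \;=\; U_1^{(u)}\, u \;+\; U_1^{(v)}\, v.
\end{equation}
Second, define $\hat f(x) := U_1^{(u)} f(x)$, which is again a neural network (composing $f$ with a linear map), and set $N := U_1^{(v)} E$. Third, define the residual network
\begin{equation}
\hat g(z) \;:=\; \sigma_2\!\left(U_k \,\cdots\, \sigma_2(U_2\, \sigma_2(z))\right),
\end{equation}
i.e., the architecture of $g$ with its first affine layer peeled off and replaced by the identity on $z$. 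Substituting these definitions into $Y = g(f(X),E)$ then gives
\begin{equation}
Y \;=\; \sigma_2\!\left(U_k \,\cdots\, \sigma_2\!\bigl(U_1^{(u)} f(X) + U_1^{(v)} E\bigr)\right) \;=\; \hat g\!\bigl(\hat f(X) + N\bigr).
\end{equation}

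Finally, I would verify the independence requirement. Since $N = U_1^{(v)} E$ is a deterministic (measurable) function of $E$ alone, and $E \indep X$ by hypothesis, it follows immediately that $N \indep X$. That closes the argument.

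I do not expect a serious obstacle here: the result is essentially a bookkeeping observation about how affine layers interact with concatenated inputs. The only subtle point worth flagging is that $U_1^{(v)}$ need not be invertible, so $N$ may lie in a lower-dimensional subspace than $E$; this is harmless because we never claim $N \propto E$, only that $N \indep X$, which follows from $N$ being a function of $E$. One could also note that if the architecture of $g$ begins with a bias term $b_1$, it can be folded into either $\hat f$ or $\hat g$ without affecting the argument.
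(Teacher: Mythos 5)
Your proof is correct and follows essentially the same route as the paper: split the first affine layer of $g$ into its block acting on $f(X)$ and its block acting on $E$, absorb the former into $\hat f$ and the latter into $N$, and peel off the first layer to form $\hat g$. The independence of $N$ and $X$ follows, as you note, because $N$ is a measurable function of $E$ alone.
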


\begin{proof}
Let $f(x) = \sigma_1(W_d \dots \sigma_1(W_1x))$ and $g(u,v) = \sigma_2(U_k \dots \sigma_2(U_1(u,v) ))$ be two neural networks. Here, $(u,v)$ is the concatenation of the vectors $u$ and $v$. We consider that $U_1(f(X),E) = U^1_1 f(X) + U^2_1 E$. We define a noise variable $N := U^2_1 E$ and have: $X \indep N$. In addition, let $\hat{f}(x) := U^1_1 f(x)$ and $\hat{g}(z) := \sigma_2(U_k \dots \sigma_2(U_2\sigma_2(z)))$. We consider that: $Y=\hat{g}(\hat{f}(X) + N)$ as desired.
\end{proof}

\begin{restatable}[Identifiability of neural SCMs]{theorem}{identif}\label{thm:identif} Let $\mathbb{P}_{X,Y}$ admit a neural SCM from $X$ to $Y$, such that $p_X$, and the activation functions of $f$ and $g$ are three-times differentiable. Then it admits a neural SCM from $Y$ to $X$ only if $p_X$, $f$, $g$ satisfy Eq.~\ref{eq:diffeq}.
\end{restatable}

\begin{proof}
Let $f_i(z) = \sigma_1(W_{i,d} \dots \sigma_1(W_{i,1}z))$ and $g_i(u,v) = \sigma_2(U_{i,k} \dots \sigma_2(U_{i,1}(u,v) ))$ (where $i=1,2$) be pairs of neural networks, such that, $\sigma_1$ and $\sigma_2$ are three-times differentiable. 
Assume that:
\begin{equation}
Y = g(f(X),E_1) \textnormal{ and } X = g(f(Y),E_2)
\end{equation}
for some $E_1 \indep X$ and $E_2 \indep Y$. By Lem.~\ref{lem:reduce}, we can represent 
\begin{equation}
\begin{aligned}
&Y=\hat{g}_1(\hat{f}_1(X) + N_1), \\
\textnormal{where }& N_1 = U^2_{1,1} E_1, \; \hat{f}_1 = U^1_{1,1} f_1(X) \\
\textnormal{and }& \hat{g}_1(z) = \sigma_2(U_{1,k} \dots \sigma_2 (U_{1,2} \sigma_2(z)))
\end{aligned}
\end{equation}
and also,
\begin{equation}
\begin{aligned}
&X=\hat{g}_2(\hat{f}_2(Y) + N_2), \\
\textnormal{where }& N_2 = U^2_{2,1} E_2, \; \hat{f}_2 = U^1_{2,1} f_2(X) \\
\textnormal{and }& \hat{g}_2(z) = \sigma_2(U_{2,k} \dots \sigma_2 (U_{2,2} \sigma_2(z)))
\end{aligned}
\end{equation}
Here, for each $i=1,2$ and $j=1,2$, $U^j_{i,1}$ are the sub-matrices of $U_{i,1}$ that satisfy:
\begin{equation}
U_{i,1}(f_i(X),E_i) = U^1_{i,1} f_i(X) + U^2_{i,1} E_i
\end{equation}
From the proof of Lem.~\ref{lem:reduce}, it is evident that the constructed $\hat{g}_1,\hat{f}_1$ and $\hat{g}_1,\hat{f}_2$ are three-times differentiable whenever $\sigma_1$ and $\sigma_2$ are. Therefore, by Lem.~\ref{lem:identif}, the following differential equation holds:
\begin{equation}\label{eq:diffeq}
\begin{aligned}
&\textnormal{H}(\eta_1(t_2))\cdot \textnormal{J}(h_1(t_1)) - \textnormal{H}(\eta_2(n_1)) \cdot \textnormal{J}(h_2(t_2)) \\
&+ \textnormal{H}(\eta_2(n_1))\cdot \textnormal{J}(h_2(t_2)) \cdot \textnormal{J}(h_1(t_1)) \cdot \textnormal{J}(h_2(t_2))\\ 
&-\nabla(\eta_2(n_1)) \cdot \textnormal{H}(h_2(t_2)) \cdot \textnormal{J}(h_1(t_1)) = 0
\end{aligned}
\end{equation}
where 
\begin{equation}
\begin{aligned}
&T_1 := \hat{g}^{-1}_1(Y) \textnormal{ and } h_1 := \hat{f}_2 \circ \hat{g}_1 \\
&T_2 := \hat{g}^{-1}_2(X) \textnormal{ and } h_2 := \hat{f}_1 \circ \hat{g}_2
\end{aligned}
\end{equation}
and $\eta_1(t_2) := \log p_{T_2}(t_2)$ and $\eta_2(n_1) := \log p_{N_1}(n_1)$.
\end{proof}

\begin{restatable}[Uniqueness of Representation]{theorem}{sufficient}\label{thm:sufficient} Let $\mathbb{P}_{X,Y}$ admit a nonlinear model from $X$ to $Y$, i.e., $Y = g(f(X),E)$ for some random variable $E \indep X$. Assume that $f$ and $g$ are invertible. Let $G$, $F$ and $R$ be functions, such that, $\mathcal{L}_{\textnormal{err}} := \mathbb{E}_{(x,y) \sim (X,Y)}[\|G(F(x),R(y))-y\|^2_2] = 0$ and $G$ and $F$ are invertible functions and $X \indep R(Y)$. Then, $F(X) \propto f(X)$ and $R(Y) \propto E$.
\end{restatable}

\begin{proof}
Since $F$ and $f$ are invertible, one can represent: $F(X) = F(f^{-1}(f(X)))$ and $f(X) = f(F^{-1}(F(X)))$. Similarly, since $G$ and $g$ are invertible, we also have: $(F(X),R(Y)) \propto (f(X),E)$. Since $(F(X),R(Y)) \propto (f(X),E)$ and $F(X) \propto f(X)$, we have: $R(Y) = Q(F(X),E)$. However, $R(Y) \indep F(X)$ and therefore, we can represent $R(Y) = P(E)$ and vice versa.  
\end{proof}

\subsection{An Extension of Thm.~\ref{thm:sufficient}}

In this section we extend Thm.~\ref{thm:sufficient}. As a reminder, in our method, we employ two losses: a mapping loss $\mathcal{L}_{\textnormal{err}}(G,F,R)$ and a GAN-like independence loss $\mathcal{L}_{\textnormal{indep}}(R;D)$.  

Informally, in similar fashion to Thm.~\ref{thm:sufficient}, we would like to claim that when the algorithm successfully minimizes the losses, the information present in $r(Y) := E$ can be recovered from $R(Y)$. In Thm.~\ref{thm:sufficient}, it is shown that whenever the losses are optimal, we have: $R(Y) \propto r(Y)$. In Thm.~\ref{thm:sufficientExt}, we relax the optimality assumption and we would like to express the recoverability of $r(Y)$ given $R(Y)$ in terms of the success of the algorithm in minimizing the losses. By similar arguments we can also show that $f(X)$ can be recovered from $F(X)$.

To define a measure of recoverability of one random variable given another random variable we consider a class $\mathcal{T}$ of transformations $T:\mathbb{R}^n \to \mathbb{R}^{n}$. The reconstruction of a given random variable $V$ out of $U$ is given by:
\begin{equation}
\textnormal{Rec}_{\mathcal{T}}(V;U) := \inf_{T \in \mathcal{T}}\mathbb{E}_{(u,v) \sim (U,V)}[\|T(u) - v\|^2_2]
\end{equation}
The class $\mathcal{T}$ of transformations serves as the set of possible candidate mappings from $U$ to $V$. 

In our case, we are interested in measuring the ability to recover the information present in $r(Y)$ given $R(Y)$. Therefore, we would like to show that our algorithm implicitly minimizes:
\begin{equation}
\textnormal{Rec}_{\mathcal{T}}(r(y);R(y)) = \inf_{T \in \mathcal{T}} \mathbb{E}_{y \sim Y} [\| T(R(y)) - r(y) \|^2_2] 
\end{equation}
To do so, we upper bound the recoverability using the mapping error and a discriminator based divergence. In our bound, instead of employing $\mathcal{L}_{\textnormal{indep}}(R;D)$ directly, we make use of a different discriminator based measure of independence. For simplicity, we will assume that $\mathcal{T}$ consists of functions $T:\cup_{n\in\mathbb{N}}\mathbb{R}^n \to \mathbb{R}^{d_e}$ and for every fixed $u\in \mathbb{R}^k$, we have: $T_u(x) := T(x,u) \in \mathcal{T}$. This is the case of $\mathcal{T} = \cup_{n\in \mathbb{N}} \mathcal{T}_n$, where $\mathcal{T}_n$ is the class of fully-connected neural networks (with biases) with input dimension $n$ and fixed hidden dimensions. 

The proposed measure of independence will be based on the discrepancy measure~\citep{10.5555/507108,conf/colt/MansourMR09}. For a given class $\mathcal{D}$ of discriminator functions $D:\mathcal{X} \to \mathbb{R}$, we define the $\mathcal{D}$-discrepancy, also known as Integral Probability Metric~\citep{muller}, between two random variables $X_1$ and $X_2$ over $\mathcal{X}$ by:
\begin{equation}
\textnormal{disc}_{\mathcal{D}}[X_1 \| X_2] := \sup_{D \in \mathcal{D}} \left\{ \mathbb{E}_{x_1 \sim X_1} [D(x_1)] - \mathbb{E}_{x_2 \sim X_2} [D(x_2)] \right\}
\end{equation}
A well known example of this measure is the WGAN divergence~\cite{DBLP:conf/icml/ArjovskyCB17} that is specified by a class $\mathcal{D}$ of neural networks of Lipschitzness $\leq 1$.

In our bound, to measure the independence between $F(X)$ and $R(Y)$, we make use of the term:
\begin{equation}
\textnormal{disc}_{\mathcal{D}}\big[(F(X),R(Y),Y) \| (F(X'),R(Y),Y)\big]
\end{equation}
for some class of discriminators $\mathcal{D}$. Even though we do not use the original measure of independence, the idea is very similar. Instead of using a GAN-like divergence between $(X,R(Y))$ and $(X',R(Y))$, we employ a WGAN-like divergence between $(F(X),R(Y))$ and $(F(X'),R(Y))$. From a theoretical standpoint, it is easier to work with the discrepancy measure since it resembles a distance measure. 

The selection of $\mathcal{D}$ is a technical by-product of the proof of the theorem and one can treat it as an ``expressive enough'' class of functions. Specifically, each discriminator $D \in \mathcal{D}$ takes the following form: 
\begin{equation}
D(u_1,u_2,u_3) = \|T(u_1,u_2) - Q(u_3)\|^2_2
\end{equation}
where $T \in \mathcal{T}$ and $Q \in \mathcal{Q}$. Here, $u_1 \in \mathbb{R}^{d_f}$, $u_2 \in \mathbb{R}^{d_e}$ and $u_3 \in \mathbb{R}^{d_y}$. In particular, the discrepancy measure is:
\begin{equation}
\begin{aligned}
&\textnormal{disc}_{\mathcal{D}}\big[(F(X),R(Y),Y) \| (F(X'),R(Y),Y)\big] \\
=& \sup_{T \in \mathcal{T} ,Q \in \mathcal{Q}} \Big\{ \mathbb{E}_{(x,y)} \left[\|T(F(x),R(y)) - Q(y)\|^2_2\right] \\
&\quad\quad - \mathbb{E}_{(x',x,y)} \left[\|T(F(x'),R(y)) - Q(y)\|^2_2\right] \Big\}
\end{aligned}
\end{equation}
where $(x,y) \sim (X,Y)$ and $x' \sim X$ is an independent copy of $x$. A small discrepancy indicates that there is no discriminator $D \in \mathcal{D}$ that is able to separate between $(F(X),R(Y),Y)$ and $(F(X'),R(Y),Y)$. In particular, if $F(X) \indep R(Y)$, then, $\textnormal{disc}_{\mathcal{D}}\big[(F(X),R(Y),Y) \| (F(X'),R(Y),Y)\big] = 0$.

\begin{theorem}\label{thm:sufficientExt} Let $\mathbb{P}_{X,Y}$ admits a nonlinear model from $X$ to $Y$, i.e., $Y = g(f(X),E)$ for some random variable $E \indep X$. We denote by $\mathcal{G}$, $\mathcal{F}$ and $\mathcal{R}$ the classes from which the algorithm selects the mappings $G,F,R$ (resp.). Let $\mathcal{Q}$ be a class of $L$-Lipschitz continuous functions $Q:\mathbb{R}^{d_y} \to \mathbb{R}^{d_e}$ . Let $\mathcal{T}$ be be a class of functions that satisfies $\mathcal{Q} \circ \mathcal{G} \subset \mathcal{T}$. Let $\mathcal{D} = \left\{ D(u_1,u_2,u_3) := \| T(u_1,u_2) - Q(u_3) \|^2_2 \right\}_{Q \in \mathcal{Q}, T \in \mathcal{T}}$ be the class of discriminators. Then, for any $G \in \mathcal{G},F \in \mathcal{F}$ and $R\in \mathcal{R}$, we have:
\begin{equation}
\begin{aligned}
&\textnormal{Rec}_{\mathcal{T}}(r(Y);R(Y)) \\
\lesssim & \mathcal{L}_{\textnormal{err}}(G,F,R) + \lambda \\
&+ \textnormal{disc}_{\mathcal{D}}\big[(F(X),R(Y),Y) \| (F(X'),R(Y),Y)\big] \\
\end{aligned}
\end{equation}
where $\lambda := \inf_{Q \in \mathcal{Q}} \mathbb{E}_{y \sim Y}[\|Q(y) - r(y)\|^2_2]$.
\end{theorem}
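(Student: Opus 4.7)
The plan is to exhibit an explicit reconstructor $T^\star \in \mathcal{T}$ of the right structural form, evaluate its squared error, and split that error into the three pieces appearing on the right-hand side. Fix $\varepsilon > 0$ and pick $Q_\varepsilon \in \mathcal{Q}$ that attains the infimum in $\lambda$ up to $\varepsilon$, so that $\mathbb{E}_{y \sim Y}[\|Q_\varepsilon(y) - r(y)\|_2^2] \leq \lambda + \varepsilon$. The natural candidate is $\Phi(u_1,u_2) := Q_\varepsilon(G(u_1,u_2))$, which lies in $\mathcal{T}$ by the assumption $\mathcal{Q}\circ\mathcal{G}\subset\mathcal{T}$. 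By the closure of $\mathcal{T}$ under fixations, for every fixed $x^\star \in \mathbb{R}^{d_x}$ the function $u_2 \mapsto Q_\varepsilon(G(F(x^\star),u_2))$ is an admissible element of $\mathcal{T}$.

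Taking the infimum over $x^\star$ and upper-bounding it by an expectation over an independent copy $X' \sim \mathbb{P}_X$ with $X' \indep (X,Y)$ gives
\begin{equation*}
\textnormal{Rec}_{\mathcal{T}}(r(Y);R(Y)) \leq \mathbb{E}_{X',Y}\!\left[\|Q_\varepsilon(G(F(X'),R(Y))) - r(Y)\|_2^2\right].
\end{equation*}
Now apply $\|a-c\|_2^2 \leq 2\|a-b\|_2^2+2\|b-c\|_2^2$ with $b = Q_\varepsilon(Y)$ to split this into $2A + 2B$, where $B := \mathbb{E}_{Y}[\|Q_\varepsilon(Y)-r(Y)\|_2^2] \leq \lambda+\varepsilon$ by the choice of $Q_\varepsilon$, and $A := \mathbb{E}_{X',Y}[\|Q_\varepsilon(G(F(X'),R(Y))) - Q_\varepsilon(Y)\|_2^2]$.

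For $A$, introduce the joint-sample counterpart $A_{\textnormal{joint}} := \mathbb{E}_{(X,Y)}[\|Q_\varepsilon(G(F(X),R(Y))) - Q_\varepsilon(Y)\|_2^2]$ and write $A = A_{\textnormal{joint}} + (A - A_{\textnormal{joint}})$. The Lipschitz property of $Q_\varepsilon$ immediately yields
\begin{equation*}
A_{\textnormal{joint}} \leq L^2\,\mathbb{E}_{(X,Y)}\!\left[\|G(F(X),R(Y))-Y\|_2^2\right] = L^2\,\mathcal{L}_{\textnormal{err}}(G,F,R).
\end{equation*}
The remainder $A - A_{\textnormal{joint}}$ is exactly $\mathbb{E}[D^\star(F(X'),R(Y),Y)] - \mathbb{E}[D^\star(F(X),R(Y),Y)]$ for the discriminator $D^\star(u_1,u_2,u_3) = \|Q_\varepsilon(G(u_1,u_2)) - Q_\varepsilon(u_3)\|_2^2$; this $D^\star$ lies in $\mathcal{D}$ because $Q_\varepsilon\circ G \in \mathcal{T}$ and $Q_\varepsilon \in \mathcal{Q}$. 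Therefore $A - A_{\textnormal{joint}}$ is controlled by the stated discrepancy. Combining the three pieces and sending $\varepsilon \downarrow 0$ absorbs all absolute constants into $\lesssim$ and yields the theorem.

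The proof is not conceptually hard, but the main obstacle is bookkeeping about class memberships: one must carefully use $\mathcal{Q}\circ\mathcal{G}\subset\mathcal{T}$ so that the candidate $\Phi$ is in $\mathcal{T}$, invoke the fixation property to make the partial application $\Phi(F(x^\star),\cdot)$ an admissible reconstructor, and ensure that $D^\star$ really realizes the specified form of $\mathcal{D}$. A secondary subtlety is the direction of the discrepancy: since $\mathcal{D}$ contains only the non-negative squared-norm functionals, the bound $A - A_{\textnormal{joint}} \leq \textnormal{disc}_{\mathcal{D}}$ requires either the particular orientation chosen in the theorem statement to match the sign of the gap, or a mild symmetrization argument; this is a routine detail but the step at which one should be careful.
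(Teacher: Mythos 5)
Your proof is correct and follows essentially the same route as the paper's: the intermediate $Q(y)$, the Lipschitz bound $\|Q(G(F(x),R(y)))-Q(y)\|_2^2 \le L^2\|G(F(x),R(y))-y\|_2^2$, the independent-copy and fixation arguments, and the discriminator $D^\star(u_1,u_2,u_3)=\|Q(G(u_1,u_2))-Q(u_3)\|_2^2$ are exactly the paper's ingredients, merely assembled in a different order (you instantiate the explicit reconstructor $Q_\varepsilon\circ G(F(x^\star),\cdot)$ up front and use an $\varepsilon$-approximate minimizer, while the paper keeps a generic $\inf_{T}$ and substitutes $Q^*\circ G$ at the end). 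The discrepancy-orientation subtlety you flag is real --- the gap you must control is $\mathbb{E}[D^\star(\mathrm{indep})]-\mathbb{E}[D^\star(\mathrm{joint})]$, which is the reverse of the one-sided $\textnormal{disc}_{\mathcal{D}}$ as written in the statement --- but the paper's own proof makes the identical step without comment, so this is a shared convention issue (the discrepancy is implicitly symmetrized) rather than a gap in your argument.
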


As can be seen from Thm.~\ref{thm:sufficientExt}, when $\mathcal{Q}$ is expressive enough, such that, $\lambda$ is small and  $\mathcal{T}$ is expressive enough to satisfy $\mathcal{Q} \circ \mathcal{G} \subset \mathcal{T}$, for any functions $G,F,R$, the recoverability of $r(Y)$ given $R(Y)$ is upper bounded by the sum of the mapping error and the discriminator based independence measure. Hence, when selecting $G,F,R$ that minimize both losses, one implicitly learns a modeling $G(F(X),R(Y))$, such that, $r(Y)$ can be recovered from $R(Y)$. By a similar argument, the same relation holds for $f(X)$ and $F(X)$.

\begin{proof}
Let $Q^* \in \arg\min_{Q \in \mathcal{Q}} \mathbb{E}_{y \sim Y}[\|Q(y) - r(y)\|^2_2]$. We consider that:
\begin{equation}\label{eq:beforeDisc}
\begin{aligned}
&\inf_{T \in \mathcal{T}} \mathbb{E}_{(x,y) \sim (X,Y)}\| T(R(y)) - r(y) \|^2_2\\
\leq&  3 \inf_{T \in \mathcal{T}} \mathbb{E}_{(x,y) \sim (X,Y)}\| T(R(y)) - Q^*(y) \|^2_2\\
& + 3 \inf_{Q \in \mathcal{Q}}\| Q(y) - r(y) \|^2_2\\ 
=& 3 \inf_{T\in \mathcal{T}} \mathbb{E}_{(x,y) \sim (X,Y)}\| T(R(y)) - Q^*(y) \|^2_2 + 3 \lambda \\ 
=& 3 \inf_{T \in
\mathcal{T}} \mathbb{E}_{\substack{x'\sim X \\ (x,y) \sim (X,Y)}}\| T(F(x'),R(y)) - Q^*(y) \|^2_2 + 3\lambda \\
\end{aligned}
\end{equation}
where $x'$ and $x$ are two independent copies of $X$. The last equation follows from the fact that $x'$ and $y$ are independent and from the definition of $\mathcal{T}$, 
\begin{equation}
\begin{aligned}
&\inf_{T \in
\mathcal{T}} \mathbb{E}_{\substack{x'\sim X \\ (x,y) \sim (X,Y)}}\| T(F(x'),R(y)) - Q^*(y) \|^2_2 \\
\geq& \inf_{T\in \mathcal{T}} \mathbb{E}_{x'}\mathbb{E}_{(x,y) \sim (X,Y)}\| T_{F(x')}(R(y)) - Q^*(y) \|^2_2 \\
\geq& \mathbb{E}_{x'} \inf_{T\in \mathcal{T}} \mathbb{E}_{(x,y) \sim (X,Y)}\| T_{F(x')}(R(y)) - Q^*(y) \|^2_2 \\
\geq& \mathbb{E}_{x'} \inf_{T\in \mathcal{T}} \mathbb{E}_{(x,y) \sim (X,Y)}\| T(R(y)) - Q^*(y) \|^2_2 \\
=& \inf_{T\in \mathcal{T}} \mathbb{E}_{(x,y) \sim (X,Y)}\| T(R(y)) - Q^*(y) \|^2_2 \\
\end{aligned}
\end{equation}
Next we consider that for any $T \in \mathcal{T}$, we can rewrite:
\begin{equation}
\begin{aligned}
&\mathbb{E}_{\substack{x'\sim X \\ (x,y) \sim (X,Y)}}\| T(F(x'),R(y)) - Q^*(y) \|^2_2 \\
=& \mathbb{E}_{(x,y) \sim (X,Y)}\| T(F(x),R(y)) - Q^*(y) \|^2_2 \\
&+ \Big\{ \mathbb{E}_{\substack{x'\sim X \\ (x,y) \sim (X,Y)}}\| T(F(x'),R(y)) - Q^*(y) \|^2_2 \\ 
&\;\;\;\;\;\;- \mathbb{E}_{(x,y) \sim (X,Y)}\| T(F(x),R(y)) - Q^*(y) \|^2_2 \Big\} \\
\leq& \mathbb{E}_{(x,y) \sim (X,Y)}\| T(F(x),R(y)) - Q^*(y) \|^2_2 \\
&+ \textnormal{disc}_{\mathcal{D}}\big[(F(X),R(Y),Y) \| (F(X'),R(Y),Y)\big]
\end{aligned}
\end{equation}
Since the class $\mathcal{T}$ includes $Q^* \circ G$, we have:
\begin{equation}
\begin{aligned}
&\inf_{T} \mathbb{E}_{(x,y) \sim (X,Y)}\| T(R(y)) - r(y) \|^2_2 \\
\leq & 3 \mathbb{E}_{(x,y) \sim (X,Y)}\| Q^*(G(F(x),R(y))) - Q^*(y) \|^2_2 \\
&+ \textnormal{disc}_{\mathcal{D}}\big[(F(X),R(Y),Y) \| (F(X'),R(Y),Y)\big]
+ 3\lambda \\ 
\end{aligned}
\end{equation}
Since $Q^*$ is a $L$-Lipschitz function for some constant $L>0$, we have the desired inequality:
\begin{equation}
\begin{aligned}
&\inf_{T} \mathbb{E}_{(x,y) \sim (X,Y)}\| T(R(y)) - r(y) \|^2_2 \\
\leq & 3 L \cdot \mathbb{E}_{(x,y) \sim (X,Y)}\| G(F(x),R(y)) - y \|^2_2 + 3\lambda \\
&+ \textnormal{disc}_{\mathcal{D}}\big[(F(X),R(Y),Y) \| (F(X'),R(Y),Y)\big]
\end{aligned}
\end{equation}
\end{proof}

\bibliography{refs}
\bibliographystyle{icml2020}